\newtheorem{prop}{Proposition}
\begin{document}

\title{Teddy: Efficient Large-Scale Dataset Distillation via Taylor-Approximated Matching} 

\titlerunning{Teddy: Efficient Large-Scale Dataset Distillation}

\author{Ruonan Yu\inst{}\orcidlink{0009-0008-4809-7119} \and
Songhua Liu\inst{}\orcidlink{0000-0003-1033-5122} \and
Jingwen Ye\inst{}\orcidlink{0000-0001-8415-3597} \and
Xinchao Wang\inst{}\orcidlink{0000-0003-0057-1404}\thanks{Corresponding author.}}

\authorrunning{Yu et al.}

\institute{National University of Singapore \\
\email{\{ruonan,songhua.liu\}@u.nus.edu}, \email{\{jingweny,xinchao\}@nus.edu.sg}} 


\maketitle

\begin{abstract}
Dataset distillation or condensation refers to compressing a large-scale dataset into a much smaller one, enabling models trained on this synthetic dataset to generalize effectively on real data. Tackling this challenge, as defined, relies on a bi-level optimization algorithm: a novel model is trained in each iteration within a nested loop, with gradients propagated through an unrolled computation graph. However, this approach incurs high memory and time complexity, posing difficulties in scaling up to large datasets such as ImageNet. 
Addressing these concerns, this paper introduces \textbf{Teddy}, a \textbf{T}aylor-approximat\textbf{e}d \textbf{d}ataset \textbf{d}istillation framework designed to handle large-scale dataset and enhance efficienc\textbf{y}. 
On the one hand, backed up by theoretical analysis, we propose a memory-efficient approximation derived from Taylor expansion, which transforms the original form dependent on multi-step gradients to a \textbf{first-order} one. On the other hand, rather than repeatedly training a novel model in each iteration, we unveil that employing a pre-cached pool of \textbf{weak} models, which can be generated from a \textbf{single} base model, enhances both time efficiency and performance concurrently, particularly when dealing with large-scale datasets. Extensive experiments demonstrate that the proposed Teddy attains state-of-the-art efficiency and performance on the Tiny-ImageNet and original-sized ImageNet-1K dataset, notably surpassing prior methods by up to 12.8\%, while reducing 46.6\% runtime. Our code will be available at \href{https://github.com/Lexie-YU/Teddy}{https://github.com/Lexie-YU/Teddy}.
  \keywords{Dataset distillation \and Taylor approximation \and Efficient training}
\end{abstract}

\section{Introduction}
\label{sec:intro}
Deep learning has made remarkable strides in various fields, largely owing to the utilization of extensive training data~\cite{deng2009imagenet,dosovitskiy2020image,krizhevsky2012imagenet,simonyan2014very,ye2024mutual}. However, there are associated expenses; achieving optimal performance necessitates substantial computational resources and a significant amount of GPU time, particularly for the latest state-of-the-art models~\cite{ramesh2022hierarchical,dosovitskiy2020image,bubeck2023sparks,rombach2022high}. Concerning these issues, a promising and emerging field, dataset distillation~(DD), also known as dataset condensation~(DC), has recently garnered significant attention. DD concentrates on condensing the extensive original dataset into a much smaller version while retaining the entirety of the dataset knowledge so that models trained on the synthetic dataset can generalize well on real data. 
\begin{figure}[t]
  \centering
   \includegraphics[width=\linewidth]{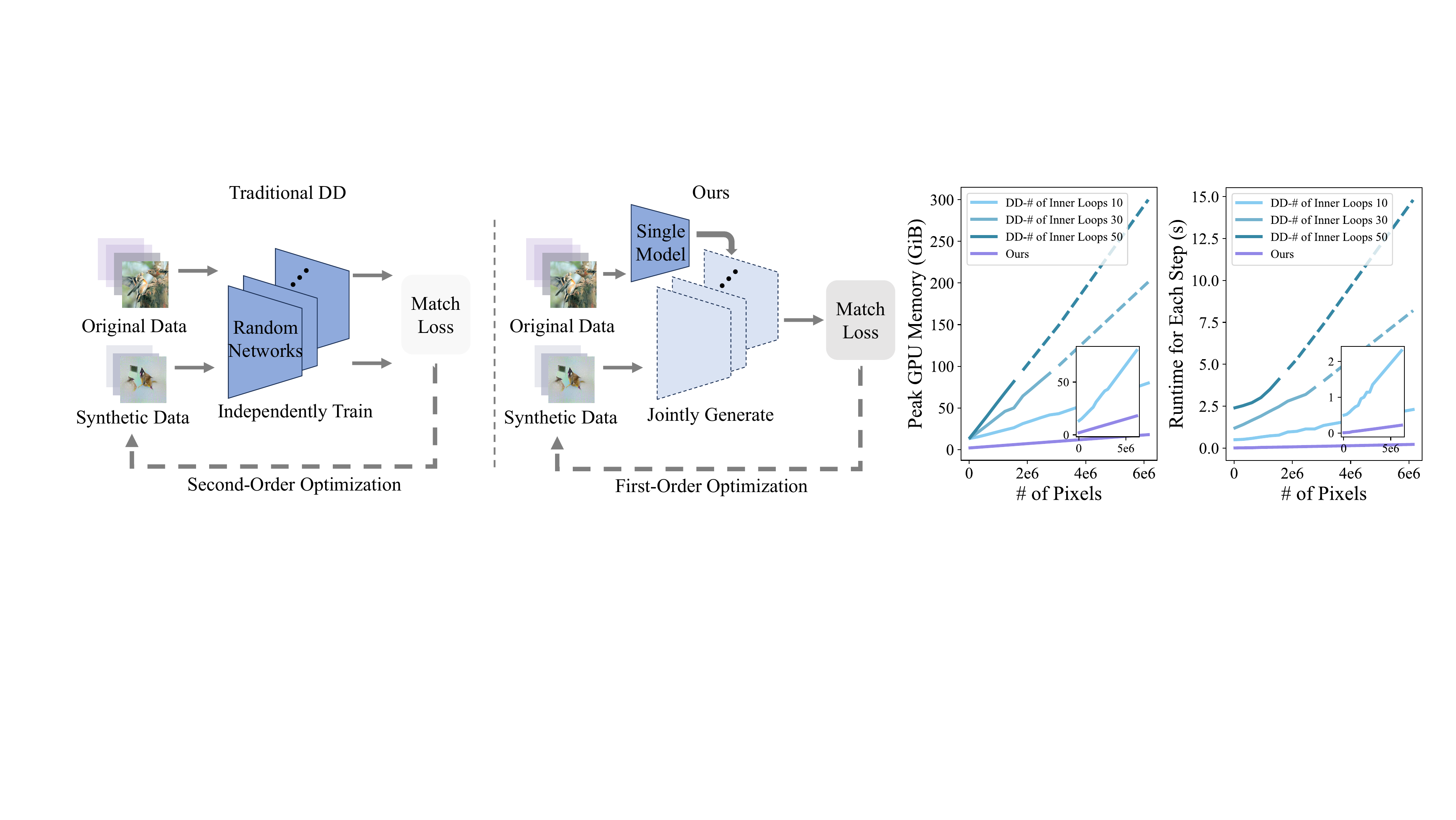}
   \caption{Illustration of meta-learning-based methods, our method~(left), and the comparison of memory and time efficiency~(right). Our proposed method exhibits surprising memory and time efficiency.}
   \label{fig:intro}
\end{figure}

Following the definition of DD, Wang \textit{et al.}~\cite{wang2018dataset} in seminal propose a principled meta-learning-based algorithm: in each iteration, a novel model is trained on the synthetic dataset in a nested loop, and the trained model is meta-tested on original data, and the gradient is backpropagated through an unrolled computation graph to update current synthetic data. 
Although effective, such bi-level optimization results in high memory and time complexity in practice. 
Specifically, on the one hand, each update of the current synthetic dataset necessitates an iterative meta-train process from a new model, making the time required for each step notably prolonged. 
On the other hand, caching the whole computation graph for back-propagation demands significant GPU memory. 
As illustrated in Fig.~\ref{fig:intro}, for instance, under the setting of a 10-class dataset with 50 images per class, each update requires over 50 GiB GPU memory, and the entire data generation takes hundreds of hours, let alone when condensing the large-scale ImageNet dataset, which has thousands of times optimization pixel space of the small dataset like CIFAR10.

To address the challenges above, subsequent research endeavors have focused on devising surrogate objectives to mitigate the bi-level nature of DD. While experiments have demonstrated their enhanced efficiency and effectiveness, these approaches still necessitate extensive forward and backward passes in neural networks for computation. 
For instance, methods such as gradient matching~\cite{zhao2020dataset,zhao2021datasetdsa} entail sampling gradients of current synthetic datasets and original data across a multitude of networks. 
Similarly, distribution matching~\cite{zhao2023datasetd,wang2022cafe} involves computing feature statistics within numerous networks. 
Additionally, training-trajectory-matching techniques~\cite{cazenavette2022dataset,cui2023scaling} necessitate the preparation of hundreds of training trajectories containing thousands of model checkpoints altogether. 
The substantial computational demands of current algorithms pose significant challenges when scaling up to handle large-scale datasets. 

In light of the analysis presented above, we rethink and deeply explore the primary DD optimization objectives in this paper. 
Concentrating on the main bi-level optimization problem, we propose an efficient approximation solution derived from Taylor expansion to the original one, transforming the original paradigm dependent on multi-step gradients or the second-order optimization to a first-order one. Moreover, we further simplify this Taylor approximation solution by pre-caching a pool of weak models, which enhances time efficiency and has also been verified as a more advantageous alternative to training models from scratch or using fully converged ones. 
Specifically, two practical cases are taken into consideration here to obtain weak teachers: prior and post-generation. Both efficiently generate the model pool from a single base model. 

To demonstrate the rationality of our proposed method, we provide a comprehensive theoretical analysis. This analysis also reveals the relationships between the mainstream optimization objectives adopted by existing methods and builds a unified theoretical framework of DD. 
Extensive experiments also validate the superiority of the proposed Teddy. It outperforms previous state-of-the-art methods by a large margin, especially for the original-sized ImageNet-1K dataset. 
Our contributions can be summarized below:
\begin{itemize}
    \item We propose an efficient large-scale dataset distillation method termed Teddy, a Taylor-approximated version of the solution to the DD. Our proposed method decouples the bi-level optimization and avoids model training within inner loops for each iteration, which can better handle large-scale datasets.
    \item We provide a comprehensive theoretical analysis for our proposed method, and we are the first to give a theoretical analysis linking the existing mainstream dataset distillation objectives used in drastically different dataset distillation methods. Our study establishes a unified theoretical framework for dataset distillation.
    \item We pre-cache weak models to decrease the time complexity introduced by training student models in nested loops for each iteration. Specifically, we consider two practical cases, the prior and post model pool generation, both efficiently generating the model pool from a single base model.
    \item We conduct comprehensive experiments on our proposed method, which show state-of-the-art performance. Notably, on the full-sized ImageNet-1K dataset, our proposed method outperforms previous methods by a substantial margin, reaching up to 12.8\%.
\end{itemize}

\section{Related Works}
\label{sec:related_works}
Dataset distillation~(DD), first proposed by Wang \textit{et al.}~\cite{wang2018dataset}, aims to condense the large-scale dataset into a small synthetic dataset and, when performed as the training data, preserve the performance of the trained models. DD addresses the tremendous computational cost issues and improves the efficiency of downstream tasks.
In recent years of rapid development, increasing research has emerged in the field of DD. Based on the variations in the optimization objectives, it can be broadly categorized into three classes: performance matching~\cite{wang2018dataset,nguyen2020dataset,zhou2022dataset,liu2024mgdd}, parameter matching~\cite{zhao2020dataset,zhao2021datasetdsa,cazenavette2022dataset,liu2023slimmable,du2023minimizing,cui2023scaling,ye2024distilled}, and distribution matching~\cite{zhao2023datasetd,wang2022cafe,zhao2023improved}. Moreover, some research is dedicated to improving the efficiency of data storage space utilization and expanding the learnable information space, such as label distillation~\cite{bohdal2020flexible,sucholutsky2021soft}, dataset parameterization~\cite{kim2022dataset,deng2022remember,liu2022dataset,lee2022dataset,zhao2021datasetdsa}, and model augmentation~\cite{zhang2023accelerating}, further enhancing the distilled performance. 

However, previous works in DD still have problems scaling up, including generating larger synthetic datasets, adapting to more extensive training networks, and handling more complex and massive original datasets~\cite{yu2023dataset,cui2022dc}. 
One significant reason for this issue is the bilevel optimization process, where each synthetic dataset update requires backpropagation through the unrolled graph, making it high-memory demanding.
For this issue, one intuitive idea is to decouple the bilevel optimization process and unbind the network and the synthetic dataset~\cite{yu2023dataset}, such as the practice of DM~\cite{zhao2023datasetd} and IDM~\cite{zhao2023improved}, treating the network as the feature extractor without updating it or updating using the original dataset, and matching the feature distribution of the original and the synthetic datasets. However, there continues to be a substantial performance disparity with the performance and parameter matching methods, and recalculating the mean of the original dataset features for every iteration is an additional overhead. 

A recent work, proposed by Yin \textit{et al.}~\cite{yin2023squeeze}, utilizes the statistic information running mean and running variance of the original training datasets stored in the batch normalization layers of the well-trained model. They only adopt one single model, which may lack a comprehensive view of the entire dataset and lead to the mode collapse problem. Moreover, statistics information stored in batch normalization layers of well-trained models may overemphasize long-tail samples, which is challenging for the synthetic dataset to fit and be optimized.

\section{Methodology}
This section provides a comprehensive introduction to the proposed method, Teddy. We will begin by elucidating the underlying motivation, followed by an exhaustive presentation of our algorithm and theoretical analysis.

\subsection{Preliminary}
Given the original dataset, denoted as $\mathcal{T}=(X_t, Y_t)$, where $X_t \in \mathbb{R}^{N_t \times d}$ and $Y_t \in \mathbb{R}^{N_t \times c}$,
the primary goal of DD is to generate a synthetic dataset $\mathcal{S}=(X_s, Y_s)$, where $X_s \in \mathbb{R}^{N_s \times d}$ and $Y_s \in \mathbb{R}^{N_s \times c}$, such that the model training on it can obtain a comparable performance with the original dataset. Here, $N_t$ and $N_s$ are the number of samples in $\mathcal{T}$ and $\mathcal{S}$, respectively, and $N_s \ll N_t$. $d$ is the number of features for each sample. $c$ is the number of classes for classification tasks.
The objective can be formulated as:
\begin{equation}
\begin{split}
    \mathcal{S} &= \mathop{\arg\min}\limits_{\mathcal{S}} \mathbb{E}_{\theta^{(0)} \sim \Theta}[l(\mathcal{T};\theta^{(T)}_{\mathcal{S}})], \\
    \theta^{(t)}_{\mathcal{S}} &= \theta^{(t-1)}_{\mathcal{S}}-\alpha\nabla_{\theta^{(t-1)}_{\mathcal{S}}} l(\mathcal{S}; \theta^{(t-1)}_{\mathcal{S}}), 
\end{split}
\label{eq:meta}
\end{equation}
where $\Theta$ is the distribution of model initialization, $T$ is the number of epochs the model trained on the synthetic dataset,  $1\leq t\leq T$, $\alpha$ is the learning rate, and $l(\cdot; \theta)$ is the cross-entropy loss for classification tasks. 

As Eq.~\ref{eq:meta} shows, the objective involves a bi-level optimization process. Each step of updating the synthetic dataset needs to backpropagate through an unrolled computation graph, making it inefficient for both memory and time. Moreover, it should train a novel model in a nested loop for every iteration, resulting in notable time consumption, particularly when handling complex datasets. 

\begin{figure*}[t]
  \centering
  \begin{subfigure}{0.3\linewidth}
    \includegraphics[width=\linewidth]{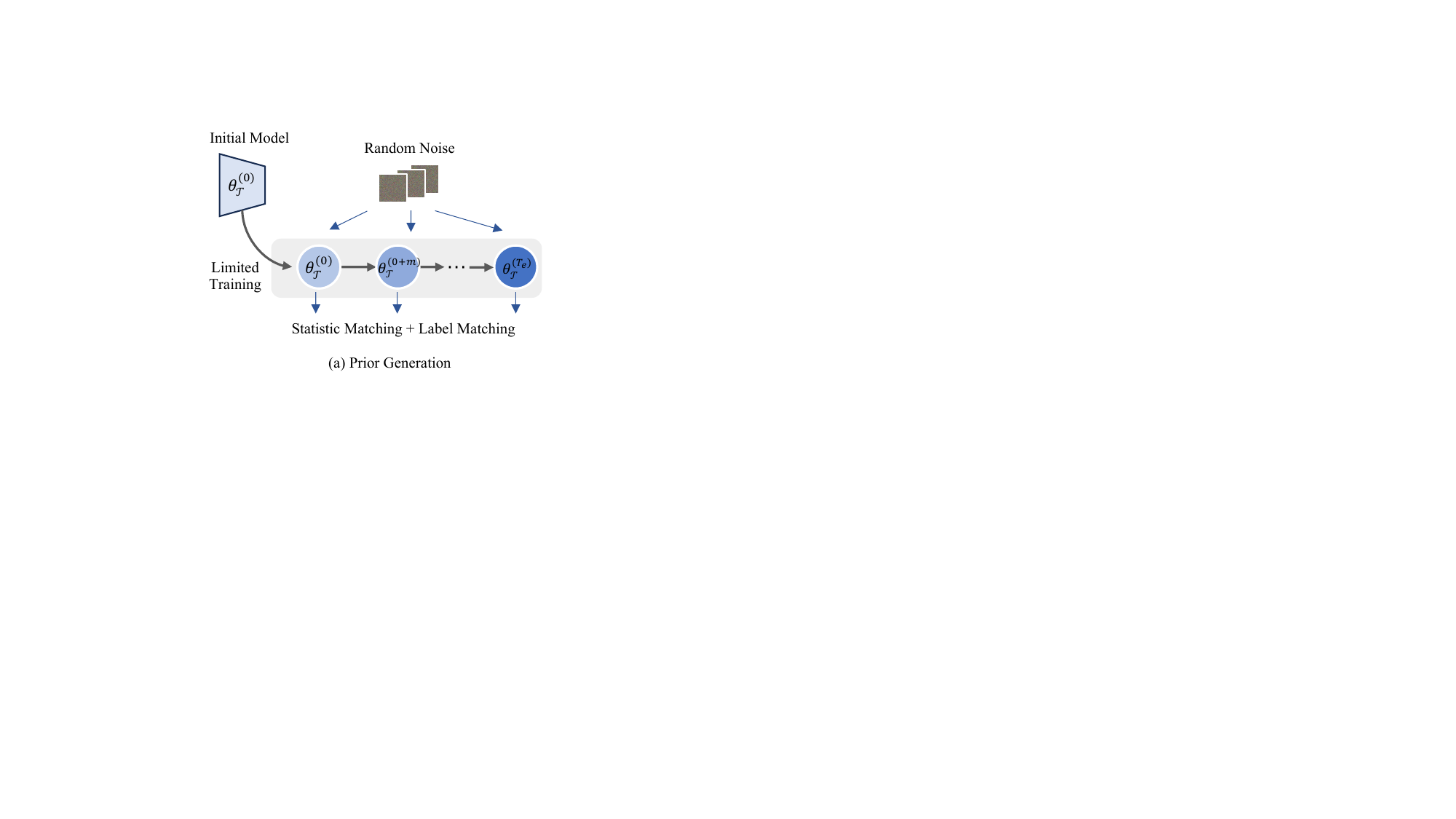}
    \caption{Prior Generation}
    \label{fig:method_prior}
  \end{subfigure}
  \begin{subfigure}{0.3\linewidth}
    \includegraphics[width=\linewidth]{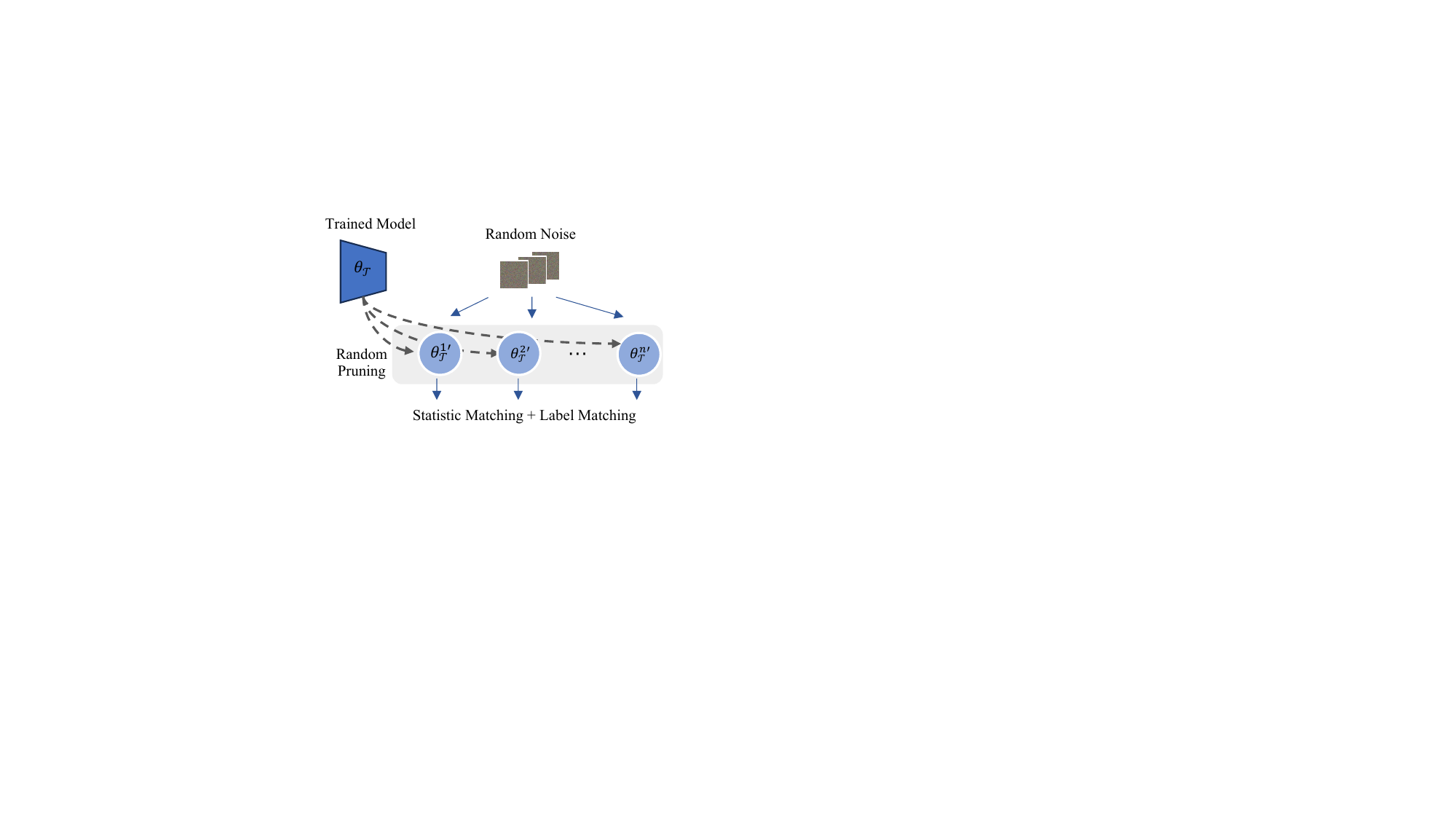}
    \caption{Post Generation}
    \label{fig:method_post}
\end{subfigure}
  \hfill
  \begin{subfigure}{0.37\linewidth}
    \includegraphics[width=0.97\linewidth]{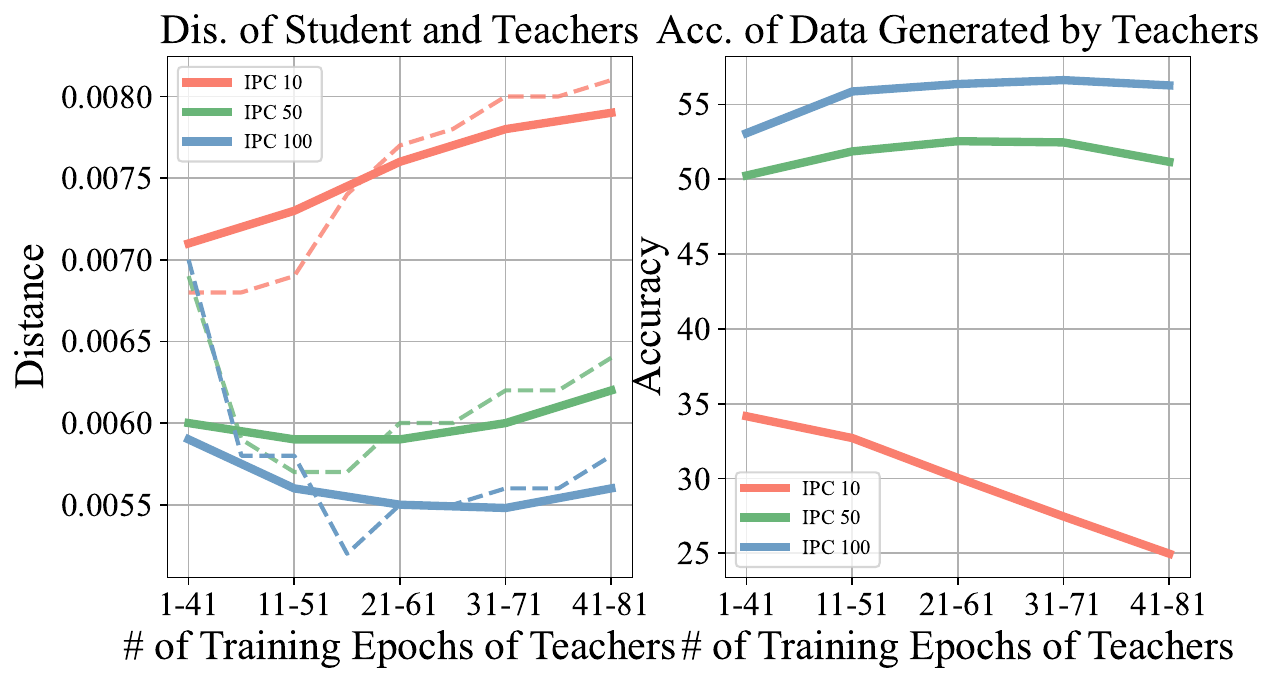}
    \caption{Distance v.s. Accuracy}
    \label{fig:motivate}
  \end{subfigure}
  \caption{(a)(b) Illustration of our proposed method. Firstly, we generate the model pool by prior or post-generation. Then, update the initialized distilled data via statistic and label matching. Lastly, input the generated distilled data with augmentation into the model pool again to obtain the soft label. (c) The left figure shows the distance between student and teacher models at different stages. The dotted line represents the distance between a single teacher and a student. The solid line represents the average of the distances within the range. The right figure shows the performance of distilled data generated from teacher models at different stages.}
\end{figure*}

\subsection{Taylor-Approximated Matching}
In this section, we start with the fundamental solution to DD, the meta-learning-based method, progressively delving into our proposed method, Teddy, with theoretical analysis.
The DD formulation is shown in Eq.~\ref{eq:meta}. Practically, to enhance the generalization ability of the distilled data on the real data, the inner loop is typically configured with multiple steps, \textit{e.g.}, $T>10$. 
Considering the synthetic dataset is of small size, the model is highly susceptible to overfitting on it as training for $T$ epochs. 
Therefore, it has $l(\mathcal{S};\theta^{(T)}_{\mathcal{S}})<\epsilon$, where $\epsilon$ is close to zero. 
Employing the Karush-Kuhn-Tucker~(KKT) conditions, the optimization objective in Eq.~\ref{eq:meta} can be transformed into the following: 
\begin{equation}
\begin{split}
    \mathcal{S} &= \mathop{\arg\min}\limits_{\mathcal{S}}[l(\mathcal{T};\theta^{(T)}_{\mathcal{S}}) +u( l(\mathcal{S};\theta^{(T)}_{\mathcal{S}})-\epsilon)], \\
    \theta^{(t)}_{\mathcal{S}} &= \theta^{(0)} -\alpha\sum_{i=0}^{t-1}g_{\mathcal{S}}^{(i)} = \theta^{(t-1)}_{\mathcal{S}}-\alpha g^{(t-1)}_{\mathcal{S}},
\end{split}
\label{eq:meta2}
\end{equation}
where $\theta^{(0)}\sim\Theta$, $g_{\mathcal{S}}^{(t)}=\nabla_{\theta^{(t)}_{\mathcal{S}}} l(\mathcal{S}; \theta^{(t)}_{\mathcal{S}})$, and $u$ is the Lagrange multiplier. 
\begin{prop}
The meta-learning-based optimization objective can be Taylor-approximated as the sum of the gradient matching of the distilled data and the original data for all steps along the training trajectory of the student model.
\label{prop:taylor}
\end{prop}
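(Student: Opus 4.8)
The plan is to expand the meta objective directly along the student's optimization trajectory, exploiting the explicit closed form of the updates supplied by Eq.~\ref{eq:meta2}. Starting from the KKT-penalized objective $l(\mathcal{T};\theta^{(T)}_{\mathcal{S}}) + u(l(\mathcal{S};\theta^{(T)}_{\mathcal{S}})-\epsilon)$, I would treat $l(\mathcal{T};\cdot)$ and $l(\mathcal{S};\cdot)$ as smooth functions of the weights and Taylor-expand the loss across one gradient step at a time, retaining only the first-order term. Since each step moves the weights by exactly $\theta^{(t)}_{\mathcal{S}}-\theta^{(t-1)}_{\mathcal{S}} = -\alpha g^{(t-1)}_{\mathcal{S}}$, the per-step increment of the original-data loss turns into an inner product between the original-data gradient and the synthetic-data gradient evaluated at the same weights.

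The first key step is the telescoping sum. Writing $l(\mathcal{T};\theta^{(t)}_{\mathcal{S}}) \approx l(\mathcal{T};\theta^{(t-1)}_{\mathcal{S}}) - \alpha\langle \nabla_\theta l(\mathcal{T};\theta^{(t-1)}_{\mathcal{S}}),\, g^{(t-1)}_{\mathcal{S}}\rangle$ and summing over $t=1,\dots,T$ collapses the intermediate losses and leaves
\[l(\mathcal{T};\theta^{(T)}_{\mathcal{S}}) \approx l(\mathcal{T};\theta^{(0)}) - \alpha\sum_{t=0}^{T-1}\langle \nabla_\theta l(\mathcal{T};\theta^{(t)}_{\mathcal{S}}),\, g^{(t)}_{\mathcal{S}}\rangle.\]
Here $l(\mathcal{T};\theta^{(0)})$ is evaluated at the initialization $\theta^{(0)}\sim\Theta$ and is therefore independent of $\mathcal{S}$, so it drops out of the $\arg\min$. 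This already exhibits the objective as a sum over all $T$ trajectory points of the alignment between the original-data and distilled-data gradients.

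The remaining step is to upgrade this alignment into a genuine gradient-matching (squared-distance) form, which is where the KKT term does its work. Applying the same per-step expansion to $l(\mathcal{S};\theta^{(T)}_{\mathcal{S}})$ yields $-\alpha\sum_t \|g^{(t)}_{\mathcal{S}}\|^2$ up to the $\mathcal{S}$-independent initialization loss, so the penalty $u(l(\mathcal{S};\theta^{(T)}_{\mathcal{S}})-\epsilon)$ effectively pins down the trajectory sum $\sum_t \|g^{(t)}_{\mathcal{S}}\|^2$. Treating both this quantity and the reference norms $\|\nabla_\theta l(\mathcal{T};\theta^{(t)}_{\mathcal{S}})\|^2$ as approximately fixed, I would complete the square, rewriting $-2\langle \nabla_\theta l(\mathcal{T};\theta^{(t)}_{\mathcal{S}}),\, g^{(t)}_{\mathcal{S}}\rangle$ as $\|\nabla_\theta l(\mathcal{T};\theta^{(t)}_{\mathcal{S}}) - g^{(t)}_{\mathcal{S}}\|^2$ minus those fixed norm terms, so that minimizing the meta objective becomes minimizing $\sum_{t} \|\nabla_\theta l(\mathcal{T};\theta^{(t)}_{\mathcal{S}}) - \nabla_\theta l(\mathcal{S};\theta^{(t)}_{\mathcal{S}})\|^2$ --- exactly the per-step gradient matching claimed.

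I expect the main obstacle to be rigorously controlling what is discarded: bounding the second- and higher-order Taylor remainders accumulated over $T$ steps (which involve the Hessians of $l(\mathcal{T};\cdot)$ and $l(\mathcal{S};\cdot)$ and scale with $\alpha^2$), and justifying the passage from the inner-product form to the squared-distance form, i.e.\ arguing that the gradient-norm terms may legitimately be held constant under the KKT constraint rather than optimized against. The $\mathcal{S}$-dependence of the initial synthetic loss $l(\mathcal{S};\theta^{(0)})$ also needs to be argued negligible or absorbed, since unlike $l(\mathcal{T};\theta^{(0)})$ it is not automatically constant in $\mathcal{S}$.
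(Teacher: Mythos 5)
Your core argument coincides with the paper's own proof: Eq.~\ref{eq:taylor} likewise applies the first-order Taylor expansion recursively across each update $\theta^{(t)}_{\mathcal{S}}=\theta^{(t-1)}_{\mathcal{S}}-\alpha g^{(t-1)}_{\mathcal{S}}$, telescopes to $l(\mathcal{T};\theta^{(T)}_{\mathcal{S}})\approx l(\mathcal{T};\theta^{(0)})-\alpha\sum_{t=0}^{T-1}g_{\mathcal{T}}^{(t)}\cdot g_{\mathcal{S}}^{(t)}$, and discards $l(\mathcal{T};\theta^{(0)})$ as independent of $\mathcal{S}$ --- and that is precisely where the main-text proof stops, reading the inner-product sum itself as the per-step ``gradient matching.'' Where you go further, converting alignment into a squared-distance objective, your route differs from the paper's supplementary treatment: you Taylor-expand $l(\mathcal{S};\theta^{(T)}_{\mathcal{S}})$ into $l(\mathcal{S};\theta^{(0)})-\alpha\sum_{t}\|g^{(t)}_{\mathcal{S}}\|^2$ so that the KKT penalty pins the synthetic gradient norms, and then complete the square; the paper instead normalizes the gradients, $e^{(t)}=g^{(t)}/\|g^{(t)}\|$, absorbs the norms into the Lagrange multiplier ($u'=u/(\alpha\|g_{\mathcal{T}}^{(t)}\|\,\|g_{\mathcal{S}}^{(t)}\|)$), and uses the identity $-e^{(t)}_{\mathcal{T}}\cdot e^{(t)}_{\mathcal{S}}=\frac{1}{2}\|e^{(t)}_{\mathcal{T}}-e^{(t)}_{\mathcal{S}}\|^2-1$ to arrive at a cosine-distance matching (Eq.~\ref{eq:e-dis}). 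The normalization route is scale-invariant by construction and needs no constancy assumption, whereas your completion of the square requires each $\|g^{(t)}_{\mathcal{S}}\|$ and $\|g^{(t)}_{\mathcal{T}}\|$ to be held fixed per step, while the KKT argument at best constrains the aggregate $\sum_{t}\|g^{(t)}_{\mathcal{S}}\|^2$ (and only if $l(\mathcal{S};\theta^{(0)})$ is itself roughly $\mathcal{S}$-independent, e.g.\ near $\log c$ at random initialization, the very caveat you flag); that said, the paper's absorption of per-step norms into a single multiplier $u'$ is comparably heuristic, so neither version is more rigorous on this point. Your closing concern about accumulated higher-order remainders is addressed by the paper not inside this proof but in a separate error analysis, via a Lipschitz upper-bound argument.
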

\begin{proof}
Here, we recursively apply the first-order Taylor expansion to unfold the unrolled computational graph. The first term of the optimization objective can be transformed into follows:
\begin{equation}
\begin{split}
     l(\mathcal{T};\theta^{(T)}_{\mathcal{S}}) 
     &= l(\mathcal{T};\theta^{(T-1)}_{\mathcal{S}}-\alpha g_{\mathcal{S}}^{(T-1)}) \approx l(\mathcal{T};\theta^{(T-1)}_{\mathcal{S}})-\alpha g_{\mathcal{T}}^{(T-1)}\cdot g_{\mathcal{S}}^{(T-1)} \\
     &\approx l(\mathcal{T};\theta^{(T-i)}_{\mathcal{S}})-\alpha \sum_{t=T-i}^{T-1}g_{\mathcal{T}}^{(t)}\cdot g_{\mathcal{S}}^{(t)} \approx l(\mathcal{T};\theta^{(0)})-\alpha \sum_{t=0}^{T-1}g_{\mathcal{T}}^{(t)}\cdot g_{\mathcal{S}}^{(t)},
\end{split}
\label{eq:taylor}
\end{equation} 
where $\theta^{(0)}\sim\Theta$, and $g_{\mathcal{T}}^{(t)}=\nabla_{\theta^{(t)}_{\mathcal{S}}} l(\mathcal{T}; \theta^{(t)}_{\mathcal{S}})$. The first term $l(\mathcal{T};\theta^{(0)})$ is the loss of the original dataset on the initial model $\theta^{(0)}$, and irrelevant to the optimization problem \textit{w.r.t} $\mathcal{S}$. Concluded from Eq.~\ref{eq:taylor}, the meta-learning-based optimization objective can be transformed into the sum of the gradient matching for all steps pathing the student model training trajectory. 
\end{proof}
However, updating the distilled data should compute and accumulate the gradient matching loss at each step within the inner loop. It involves a second-order optimization problem, which may cause high computational complexity.
To tackle this issue, we transfer the optimization objective into a first order one.
\begin{prop}
Gradient matching is equivalent to first-order and second-order statistic information matching in feature space if the dataset classes are balanced.
\label{prop:gradtodis}
\end{prop}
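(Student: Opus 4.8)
The plan is to specialize the network to a feature extractor $\phi_\theta(\cdot)\in\mathbb{R}^{h}$ followed by a linear classifier $W\in\mathbb{R}^{c\times h}$, and to examine the gradient of the cross-entropy loss that drives the matching term $g_{\mathcal{T}}^{(t)}\cdot g_{\mathcal{S}}^{(t)}$ appearing in Proposition~\ref{prop:taylor}. First I would write the per-dataset gradient with respect to the classifier weights as an average of outer products,
\begin{equation}
\nabla_W l(D;\theta)=\frac{1}{|D|}\sum_{i\in D}(p_i-y_i)\,\phi_\theta(x_i)^{\top},
\label{eq:gradW}
\end{equation}
where $p_i=\mathrm{softmax}(W\phi_\theta(x_i))$ and $y_i$ is the one-hot label. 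Matching $g_{\mathcal{T}}$ and $g_{\mathcal{S}}$ then amounts to equating the averaged outer products in Eq.~\ref{eq:gradW} for $D=\mathcal{T}$ and $D=\mathcal{S}$.

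Next I would exploit the balanced-class hypothesis to regroup the sum class by class. Since each class $k$ contributes the same fraction $1/c$ of the samples in both $\mathcal{T}$ and $\mathcal{S}$, the normalized gradient splits into a weighted sum of class-conditional contributions carrying identical weights across the two datasets, so it suffices to match each class term separately. The label part $-\,y_i\,\phi_\theta(x_i)^{\top}$, summed over class $k$, collapses to $-\,e_k\,\mu_k^{\top}$ with $\mu_k$ the class-$k$ feature mean, which is exactly the \emph{first-order} statistic.

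The crux is the prediction part $\sum_{i:\,y_i=k}p_i\,\phi_\theta(x_i)^{\top}$, which is nonlinear in the features. Here I would Taylor-expand $p_i$ about the class mean $\mu_k$: the zeroth-order term reproduces another multiple of $\mu_k$ (first order), while the first-order term, through the identity $\sum_{i:\,y_i=k}(\phi_\theta(x_i)-\mu_k)\,\phi_\theta(x_i)^{\top}=N_k\,\Sigma_k$, produces precisely the class covariance $\Sigma_k$ (the \emph{second-order} statistic). Collecting terms shows the class-$k$ gradient is, up to second order, a fixed linear functional of $(\mu_k,\Sigma_k)$, so equating gradients between $\mathcal{T}$ and $\mathcal{S}$ is equivalent to matching the class-wise feature means and covariances.

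I expect the main obstacle to be controlling the nonlinear softmax term: the clean identification with first- and second-order statistics rests on truncating the expansion of $p_i$ at first order, so I would need to argue that the residual (third- and higher-order feature moments) is negligible, e.g. in a small feature-variance or near-initialization regime where the softmax Jacobian is well approximated by its value at $\mu_k$. A secondary point is that I have only analyzed the last-layer gradient; extending to the feature-extractor parameters introduces chain-rule factors $\partial\phi_\theta/\partial\theta$, which I would argue preserve the same dependence since they act as $\theta$-dependent but dataset-independent maps on the feature statistics, and the balance assumption is exactly what keeps the per-class weights aligned throughout.
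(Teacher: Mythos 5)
Your overall strategy---reduce to the last-layer gradient, split it into a label term and a prediction term, and identify these with feature statistics---parallels the paper's, and your use of class balance (to equalize per-class weights across $\mathcal{T}$ and $\mathcal{S}$, and to pass to means) plays essentially the same role as in the paper's Eq.~\ref{eq:balance}. But your choice of loss creates a genuine gap that the paper deliberately avoids. The paper's proof (Eq.~\ref{eq:gradtodis}, elaborated in Eqs.~\ref{eq:gradupper}--\ref{eq:balance}) works in the FRePo setting with a linear last layer under a squared-error-type objective, so the gradient is \emph{exactly}
\[
g=\frac{1}{|X|}f_{\theta}(X)^{\top}f_{\theta}(X)\,W-\frac{1}{|X|}f_{\theta}(X)^{\top}Y,
\]
i.e., exactly linear in the feature second moment (the Gram/covariance matrix) and in the feature--label cross term (the class-wise means). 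The conclusion then needs no expansion at all: the $\ell^2$ gradient-matching distance is bounded by a covariance-difference term plus a mean-difference term (Eq.~\ref{eq:gradupper}), covariance matching subsumes variance matching (Eq.~\ref{eq:unfold}), and balance lets class-wise means be replaced by the global mean.

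The gap in your version is precisely the obstacle you flag but do not resolve: with a softmax cross-entropy head, the prediction term $\sum_{i:\,y_i=k}p_i\,\phi_\theta(x_i)^{\top}$ is a nonlinear functional of the features, and identifying it with $(\mu_k,\Sigma_k)$ requires truncating the Taylor expansion of $p_i$ at first order. The discarded residual depends on third- and higher-order feature moments, and nothing in your hypotheses (class balance, near-initialization) bounds it; two feature distributions can agree on means and covariances yet produce different softmax gradients, so what you obtain is an approximation in an unspecified regime, not the stated equivalence. To close the argument you would either have to quantify that residual explicitly (which weakens the proposition to an approximate statement), or do what the paper does: replace the softmax head by the linear/ridge-regression head of FRePo, under which the gradient is an exact linear image of the first and second moments and the difficulty disappears. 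A secondary point: even after truncation, your argument shows that matching $(\mu_k,\Sigma_k)$ \emph{implies} matching gradients (the statistics enter through a fixed linear functional, which need not be injective); this sufficiency direction is also all the paper's upper-bound argument delivers, so the conclusion should be phrased in that one-sided form rather than as a two-sided equivalence.
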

\begin{proof}
For simplicity, we assume the case in FRePo~\cite{zhou2022dataset}, where only the last linear layer of the network~($\theta$) is updated for synthetic data, denoted as $W$, while the preceding layers~($f_{\theta}$) serve as the feature extractor. In this case, the gradient can form as follows:
\begin{equation}
\begin{split}
    g &= \frac{\partial l(\cdot;\theta)}{\partial W} =\frac{1}{|X|} f_{\theta}(X)^T(f_{\theta}(X)W-Y)\\
      &= \frac{1}{|X|}f_{\theta}(X)^Tf_{\theta}(X)W - \frac{1}{|X|}f_{\theta}(X)^TY \\
      &\overset{\theta^{(0)} \sim \Theta}{=} \frac{1}{|X|}\sigma^2(f_{\theta}(X))W-\frac{1}{|X|}\mu(f_{\theta}(X)).
\end{split}
\label{eq:gradtodis}
\end{equation}
The first term of Eq.~\ref{eq:gradtodis} is the weighted correlation of the extracted features of the dataset, and the second term is the class-wise mean of the extracted features. Under the condition of $\theta^{(0)}\sim\Theta$, the constraining covariance is essentially equivalent with variance. Also, when classes in the dataset are balanced, the class-wise mean can be replaced by global mean. For detailed proofs, please refer to the supplementary materials.
\end{proof}
Therefore, the optimization objective in Eq.~\ref{eq:meta2} can be transferred to:
\begin{equation}
\begin{split}
    l(\mathcal{T};\theta^{(T)}_{\mathcal{S}})&\approx\mathbb{E}_{\theta^{(0)}\sim\Theta}[\sum_{t=0}^{T-1}(\sum_l||\mu_l(f_{\theta_{\mathcal{S}}^{(t)}}(X_s))-\mu_l(f_{\theta_{\mathcal{S}}^{(t)}}(X_t))||_2 \\
     &+\sum_l ||\sigma_l^2(f_{\theta_{\mathcal{S}}^{(t)}}(X_s))-\sigma_l^2(f_{\theta_{\mathcal{S}}^{(t)}}(X_t))||_2) 
     +u\cdot l(\mathcal{S};\theta^{(T)}_{\mathcal{S}})],
\end{split}
\label{eq:dis}
\end{equation}
where $\mu_l$ and $\sigma^2_l$ refer to the mean and variance of the $l^{th}$ layer features. 
Here, we solve the core issue of bi-level optimization by stripping the inner loop training process from the backpropagation computational graph. That is, we match the statistical information of the original and synthetic dataset in the feature space. 
Specifically, we train a single student trajectory for every iteration from scratch. For each checkpoint along the trajectory, we perform the statistic information~(the mean and variance in each layer) matching of the original
and synthetic datasets, which largely improves the efficiency.

\subsection{Model Pool Generation}
Considering that we have already improved the efficiency by Eq.~\ref{eq:dis}, it is still time demanding since we need to re-train the new model for each synthetic data updating.
It will become significantly pronounced when handling the large-scale datasets. 

To solve this problem and improve the efficiency, we should deal with the long student training trajectory and retraining student model for each iteration. We notice that for each synthetic data updating iteration, any segments of the student model training trajectory that begin with $t$ and end with $t+m$ have: 
\begin{equation}
l(\mathcal{T};\theta_{\mathcal{S}}^{(t+m)})=l(\mathcal{T};\theta_{\mathcal{S}}^{(t)}-\alpha\sum_{i=t}^{t+m-1}g_{\mathcal{S}}^{(i)})\approx l(\mathcal{T};\theta^{(t)}_{\mathcal{S}})-\alpha (\sum_{i=t}^{t+m-1}g_{\mathcal{S}}^{(i)})\cdot g_{\mathcal{T}}^{(t)},
\label{eq:multi-step}
\end{equation}
of which the $(\sum_{i=t}^{t+m-1}g_{\mathcal{S}}^{(i)})\cdot g_{\mathcal{T}}^{(t)}$ shows that multi-step student model updating $\sum_{i=t}^{t+m-1}g_{\mathcal{S}}^{(i)}$ is comparable with single-step teacher updating $g_{\mathcal{T}}^{(t)}$ for appropriate interval $m$. Thus, each student trajectory segmentation can be replaced by a single \textbf{comparable} teacher model. The ``comparable'' here means the teachers have the same-level performance as students~(trained on the synthetic dataset), which does not reach convergence on the real data and is \textbf{weak} in performance. Also, the distance between the students and teachers is close. 

Insight from Eq.~\ref{eq:multi-step}, to further improve the time efficiency, 
we construct the weak teacher model pool from two practical cases, the prior and post-generation. These strategies are both efficiently generating model pool from a single base model, as shown in Fig.~\ref{fig:method_prior} and Fig.~\ref{fig:method_post}. 
Here, as we utilize weak teachers instead of fully-convergence ones, we can just use a single model as the base to generate diverse weak teachers. This strategy does not compromise performance.
More specifically, if the base model is:
\begin{itemize}
    \item at the early stage or randomly initialized, we adopt the prior-generation strategy. We pre-cache the weak teachers from the early stage of the training trajectory for every $m$ steps, substituting the whole student training trajectory.
    \item at the late stage or well-trained, we propose an alternative pruning-based post-generation model pool method. Here, we employ the DepGraph~\cite{fang2023depgraph} with random strategy, complemented by fine-tuning with very limited~(\textit{e.g.}, 0-2) steps. 
\end{itemize}
Additionally, when dealing with extensive models and datasets, training weak teachers, even with a limited number of epochs, still remains a significant overhead. In this case, post-generation allows for the rapid generation of many weak models with low computational costs. Also, pruned models exhibit enhanced inference speeds, further boosting the efficiency of our method. 

To verify the rationality of this strategy, we conduct validation experiments. We measure the distance between the student and teacher models at different stages and the accuracy of the distilled data generated from teacher models at different stages. Here, we use KL divergence to measure model distance, capturing the information loss when approximating one probability distribution with another. The results shown in Fig.~\ref{fig:motivate} reveal a strong inverse correlation between model distance and distilled data performance. Closer distances indicate better approximation and superior performance.

Here, as we replace the student models with weak teachers, we can utilize the running mean and running variance for the original dataset from the batch normalization layers in the weak teachers instead of recalculating multiple times for each iteration. It will reduce a lot of computational costs, especially for large-scale datasets.
Additionally, the classifier prior $l(\mathcal{S};\theta^{(T)}_{\mathcal{S}})$ can also be replaced by ensemble classifier prior, as weak teachers may mislead generation direction. 
It will also benefit better alignment with the target classes. 
In summary, the final optimization objective is as follows:
\begin{equation}
\begin{split}
    &\sum_{t=T_b}^{T_e}(\sum_l||\mu_l(f_{\theta_{\mathcal{T}}^{(t)}}(X_s))-RM_{\theta^{(t)}_{\mathcal{T}}}^l(X_t))||_2 \\
     &+\sum_l ||\sigma_l^2(f_{\theta_{\mathcal{T}}^{(t)}}(X_s))-RV_{\theta^{(t)}_{\mathcal{T}}}^l(X_t))||_2 +u\cdot l(\mathcal{S};\theta^{(t)}_{\mathcal{T}})),
\end{split}
\label{eq:objective}
\end{equation}
where $RM$ and $RV$ refer to the running mean and variance stored in the batch normalization layers. $T_b$ and $T_e$~(both at the early stage) are the starting and end points of the teacher training trajectory for statistic information matching. We only consider performing the matching for every $m$ steps from $T_b$ to $T_e$.
For more theoretical details, please refer to the supplementary materials.

\begin{algorithm}[t]
\caption{Teddy Framework}
\KwIn{Original dataset $\mathcal{T}$, single base model $\theta_{base}$} 
\KwOut{Synthetic dataset $\mathcal{S}$}
\label{framework}
Initialize $\mathcal{S}$\\
\If{$\theta_{base}$ is from random or at early stage}{Prior-generate model pool $\mathcal{M}$}
\ElseIf{$\theta_{base}$ is well-trained or at late stage}{Post-generation model pool $\mathcal{M}$}

\While{not converge}{
Randomly select $n$ models from $\mathcal{M}$ \\
Compute $\mathcal{L}(\mathcal{S},\mathcal{T})$ as Eq.~\ref{eq:objective}\\
Back-propagate and update $\mathcal{S}$\\
}
Ensembly generate soft label via $\mathcal{M}$, $Y_s = \frac{1}{|\mathcal{M}|}\sum_{\theta \in \mathcal{M}} h(\mathcal{A}(X_s);\theta)$\\
\Comment{$h(\cdot;\theta)$ represents the model with parameter $\theta$, $\mathcal{A}$ is the function of data augmentation.}\\
\Return{$\mathcal{S}$}
\end{algorithm}
\subsection{Algorithm Summary}
In summary, we propose an efficient large-scale dataset distillation method, Teddy, solving the core bi-level optimization problem and further improving the efficiency via Taylor approximation. The framework of Teddy is shown in Algorithm \ref{framework}.
We first generate model pool $\mathcal{M}$ from single base model $\theta_{base}$. According to the state of the $\theta_{base}$, we choose from two efficient model pool generation strategies, prior and post-generation. Then, we follow our proposed optimization objective, Eq.~\ref{eq:objective}, to update the synthetic dataset, which only requires the first-order optimization process.
Here, considering the weak teacher models may mislead the data generation direction, we generate soft labels via weak teacher ensemble, which provides more richer label information expression. 

\section{Experiments}
In this section, we validate the efficacy of our proposed method, Teddy, via extensive experiments. 
Note that Teddy is designed for large-scale dataset distillation; we focus on two more challenging datasets, Tiny-ImageNet and full-size ImageNet.
We also compare Teddy on the small datasets like CIFAR10 with fair comparison, showing its comparative performance~(it is included in supplementary).
We evaluate the cross-architecture generalization performance of our generated synthetic dataset and perform comprehensive ablation studies to demonstrate the effectiveness of the strategies adopted in our method.

\subsection{Experiment Setting}
\subsubsection{Datasets and Networks}
To validate the efficacy and the efficiency of Teddy for large-scale datasets, we adopt two challenging datasets, 64 $\times$ 64 Tiny-ImageNet and 224 $\times$ 224 ImageNet-1K. While some previous methods have demonstrated effectiveness on downsized versions of ImageNet-1K or subsets, Teddy achieves impressive performance on the original, full-size ImageNet-1K.
As for Tiny-ImageNet, following the instruction of MoCo~(CIFAR)~\cite{he2020momentum} and the prior work SRe$^2$L~\cite{yin2023squeeze}, we adopt modified ResNet18 as the teacher model by replacing the first Conv layer with the 3$\times$3 kernel, stride 1 Conv layer, and removing the first maxpool. For ImageNet-1K, we use ResNet18~\cite{he2016deep}, implemented by the official torchvision.
Moreover, to show the generalization capabilities of our method, we evaluate the cross-architecture classification performance of ImageNet-1K with IPC 10 on several model architectures: ResNet50~\cite{he2016deep}, ResNet101~\cite{he2016deep}, DenseNet121~\cite{huang2017densely}, MobileNetV2~\cite{sandler2018mobilenetv2}, ShuffleNetV2~\cite{ma2018shufflenet}, and EfficientNetB0~\cite{tan2019efficientnet}.

\begin{table}[!t]
  \centering
  \resizebox{\linewidth}{!}{
  \begin{tabular}{c c c c c c c c}
    \toprule
    \multirow{2}{*}{\bf Method} & & \multicolumn{2}{c}{\bf Tiny-ImageNet} & &\multicolumn{3}{c}{\bf ImageNet-1K} \\
    \cmidrule{3-4} \cmidrule{6-8}
    & & 50 & 100 & & 10 & 50 & 100 \\
    \midrule
    \bf Random~(Conv) && 15.1 $\pm$ 0.3 & 24.3 $\pm$ 0.3 && 4.1 $\pm$ 0.1$^*$ & 16.2 $\pm$ 0.8$^*$ & 19.5 $\pm$ 0.5$^*$\\
    \bf Random~(ResNet18) && 18.2 $\pm$ 0.2 & 25.0 $\pm$ 0.2 && 6.8 $\pm$ 0.1 & 32.0 $\pm$ 0.2 & 45.7 $\pm$ 0.1\\
    \midrule
    \bf DC~\cite{zhao2020dataset} && 11.2 $\pm$ 0.3 & - && - & - & - \\
    \bf DSA~\cite{zhao2021datasetdsa} && 25.3 $\pm$ 0.2 & - && - & - & - \\
    \bf DM~\cite{zhao2023datasetd} && 24.1 $\pm$ 0.3 & 29.4 $\pm$ 0.2 && - & - & - \\
    \bf IDM~\cite{zhao2023improved} && 27.7 $\pm$ 0.3 & - && - & - & - \\
    \bf MTT~\cite{cazenavette2022dataset} && 28.2 $\pm$ 0.5 & 33.7 $\pm$ 0.6 && - & - & - \\
    \bf FTD~\cite{du2023minimizing} && 31.5 $\pm$ 0.3 & 34.5 $\pm$ 0.4 && - & - & - \\
    \bf TESLA~\cite{cui2023scaling} && 33.4 $\pm$ 0.5 & 34.7 $\pm$ 0.2 && 17.8 $\pm$ 1.3$^*$ & 27.9 $\pm$ 1.2$^*$ & 29.2 $\pm$ 1.0$^*$ \\
    \bf SRe$^2$L~\cite{yin2023squeeze} && 41.1 $\pm$ 0.4 & 49.7 $\pm$ 0.3 && 21.3 $\pm$ 0.6 & 46.8 $\pm$ 0.2 & 52.8 $\pm$ 0.3 \\
    \midrule
    \bf Ours~(post) && 44.5 $\pm$ 0.2~(\textcolor{blue}{+ 3.4}) & 51.4 $\pm$ 0.2~(\textcolor{blue}{+ 1.7}) && 32.7 $\pm$ 0.2~(\textcolor{blue}{+ 11.4}) & \textbf{52.5 $\pm$ 0.1~(\textcolor{blue}{+ 5.7})} & 56.2 $\pm$ 0.2~(\textcolor{blue}{+ 3.4})\\
    \bf Ours~(prior) && \textbf{45.2 $\pm$ 0.1~(\textcolor{blue}{+ 4.1})} & \textbf{52.0 $\pm$ 0.2~(\textcolor{blue}{+ 2.3})} && \textbf{34.1 $\pm$ 0.1~(\textcolor{blue}{+ 12.8})} & \textbf{52.5 $\pm$ 0.1~(\textcolor{blue}{+ 5.7})} & \textbf{56.5 $\pm$ 0.1~(\textcolor{blue}{+ 3.7})}\\
    \bottomrule
  \end{tabular}
  }

  \caption{Comparison with baseline methods. $^*$ indicates the evaluation results on downsampled ImageNet-1K dataset. Here, SRe$^2$L and our proposed methods adopt the ResNet18 as the training and evaluation model, other methods adopt ConvNet.}
  \label{tab:baseline}
\end{table}

\begin{table}[!t]
    \centering
    \resizebox{\linewidth}{!}{
    \begin{tabular}{c c c c c c c}
        \toprule
        \textbf{Method}  & \bf ResNet50 & \bf ResNet101 & \bf DenseNet121 &\bf MobileNetV2 & \bf ShuffleNetV2 & \bf EfficientNetB0\\
        \midrule
        \bf SRe$^2$L~\cite{yin2023squeeze} & 28.4 $\pm$ 0.1 & 30.9 $\pm$ 0.1 & 21.5 $\pm$ 0.5 & 10.2 $\pm$ 0.2 & 29.1 $\pm$ 0.1 & 16.1 $\pm$ 0.1 \\
        \midrule
        \bf Ours~(post) & 37.9 $\pm$ 0.1~(\textcolor{blue}{+ 9.5}) & 40.0 $\pm$ 0.1~(\textcolor{blue}{+ 9.1}) & 33.0 $\pm$ 0.1~(\textcolor{blue}{+ 11.5}) & 20.5 $\pm$ 0.1~(\textcolor{blue}{+ 10.3}) & \textbf{40.0 $\pm$ 0.3~(\textcolor{blue}{+ 10.9})} & 27.3 $\pm$ 0.2~(\textcolor{blue}{+ 11.2})\\
        \bf Ours~(prior) & \textbf{39.0 $\pm$ 0.1~(\textcolor{blue}{+ 10.6})} & \textbf{40.3 $\pm$ 0.1~(\textcolor{blue}{+ 9.4})} & \textbf{34.3 $\pm$ 0.1~(\textcolor{blue}{+ 12.8})} & \textbf{23.4 $\pm$ 0.3~(\textcolor{blue}{+ 13.2})} & 38.5 $\pm$ 0.1~(\textcolor{blue}{+ 9.4}) & \textbf{29.2 $\pm$ 0.1~(\textcolor{blue}{+ 13.1})} \\
        \bottomrule
    \end{tabular}
    }
    \caption{Evaluation results of cross-architecture generalization under the ImageNet-1K with IPC 10 setting. SRe$^2$L and our methods use ResNet18 as the training model.}
    \label{tab:crossarch}
\end{table}

\subsubsection{Implementation Details}
For prior model pool generation,
we cache 9 checkpoints in the early stage of single teacher training trajectory to build the model pool for generating distilled data of ImageNet-1K, and 8 for Tiny-ImageNet. 
During the data generation phase, a subset of 3 models will be randomly selected to generate the $i^{th}$ image for all classes, alleviating the mode collapse problem.
As for the pruning-based model pool, we perform pruning on the off-the-shelf pre-trained ResNet18 provided by PyTorch, with the Top-1 accuracy of 69.76\%, and finetune for very limited epochs, \textit{e.g.}, 0-2 epochs. The GFLOPs of the target pruned model is 1.2G, and the number of parameters is 7.72M. Here, we save 10 models for generating distilled data of ImageNet-1K, and 9 for Tiny-ImageNet. We randomly select 4 models from the model pool to generate the synthetic dataset for ImageNet-1K and 5 models for Tiny-ImageNet.
Moreover, after generation, the augmented synthetic data will input the model pool again to get soft labels. For more details, please refer to the supplementary materials.
\subsubsection{Evaluation and Baselines}
Following prior works~\cite{wang2018dataset,zhao2020dataset,zhao2023datasetd}, we evaluate the distilled datasets by training several randomly initialized models from scratch and report the mean and standard deviation of their accuracy on the corresponding real test set. We compare our proposed methods with series methods, including DC~\cite{zhao2020dataset}, DSA~\cite{zhao2023datasetd}, DM~\cite{zhao2023datasetd}, IDM~\cite{zhao2023improved}, MTT~\cite{cazenavette2022dataset}, FTD~\cite{du2023minimizing}, TESLA~\cite{cui2023scaling} and SRe$^2$L~\cite{yin2023squeeze}.
Here, we present the results derived from their original paper or DD benchmarks~\cite{cui2022dc} if available for selected baseline methods. However, as most existing methods are challenging to apply to the full-sized ImageNet-1K, we also report results on downsized ImageNet-1K. Additionally, the methods adopted to compare in the baseline all employ ConvNet, and SRe$^2$L~\cite{yin2023squeeze} and our method utilize ResNet18 as the training and evaluation model.

\subsection{Results on Baselines}
As demonstrated in Table~\ref{tab:baseline}, our proposed method, Teddy, significantly surpasses all the baselines for both the Tiny-ImageNet and ImageNet-1K datasets with all IPC settings. Furthermore, comparing the results across the two datasets reveals that our method exhibits a more pronounced improvement on complex datasets. 
In particular, in the case of ImageNet-1K with IPC 10, our evaluation results outperform the previous state-of-the-art methods by a large margin up to 12.8\%. Moreover, our method achieves the same performance with the setting of IPC 50~(52.5\%) as the previous SOTA method in the case of IPC 100~(52.8\%). 
Additionally, for DD, enhancing the performance of the larger synthetic dataset is particularly challenging. 
This is because the existing DD methods focus mainly on key dataset information, ignoring long-tail information. With increased IPCs, there is less room to enhance as compressing these details is hard.
However, as shown in Table~\ref{tab:baseline}, our method considerably improves the performance for large IPCs~(5.7\% and 3.7\% for IPC 50 and 100).
For the post-generation, although we applied pruning to the teacher model, resulting in a reduction of statistic information space in its batch normalization layers, the results in Table~\ref{tab:baseline} indicate that the impact is not substantial. We can still achieve comparable performance.

\subsection{Results on Cross-Architecture Generalization}
The generalization capacity is vital for practically applying dataset distillation. 
Here, we evaluate the generalization performance of the distilled dataset of the ImageNet-1K IPC 10 setting. In selecting model architecture, we adopt the ResNet50, ResNet101, DenseNet121, MobileNetV2, ShuffleNetV2, and EfficientNetB0. All results are reported in Table~\ref{tab:crossarch}. 

The results show that our proposed method, Teddy, surpasses all previous state-of-the-art methods by a significant margin.
When the evaluation models have larger and deeper architectures, there is a significant and proportional performance improvement, comparing 34.1\%, 39.0\%, and 40.3\% for ResNet18, 50, and 101, with enhancements of 12.8\%, 10.6\% and 9.4\%. 
For structurally different networks, our method demonstrates strong generalization capabilities. For instance, on DenseNet121, MobileNetV2, ShuffleNetV2, and EfficientNetB0, we achieved 34.3\%, 23.4\%, 40.0\%, and 29.2\%, respectively, surpassing the SOTA by 12.8\%, 13.2\%, 10.9\%, and 13.1\%. 
As shown in Table~\ref{tab:crossarch}, the evaluation results of the post-generation method indicate a strong generalization capability, even with the architecture heterogeneity introduced by network pruning, surpassing the previous SOTA methods.

\subsection{Ablation Study}
To substantiate the efficacy of our method, we perform ablation studies explicitly focusing on the effect of the model pool and the weak teachers. The outcomes underscore the effectiveness of the employed strategies, leading to a substantial enhancement in the performance of our proposed method.

\begin{figure*}[t]
  \centering
  \begin{subfigure}{0.18\linewidth}
    \includegraphics[width=\linewidth]{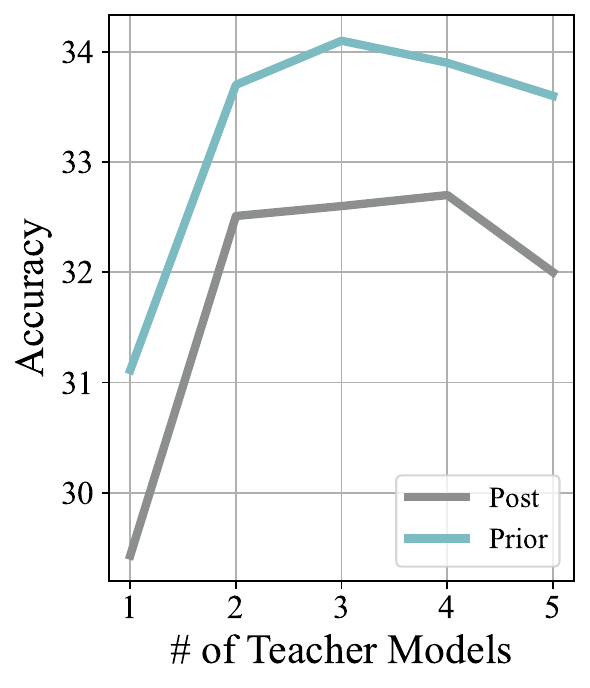}
    \caption{}
    \label{fig:ablation1}
  \end{subfigure}
  \begin{subfigure}{0.19\linewidth}
    \includegraphics[width=\linewidth]{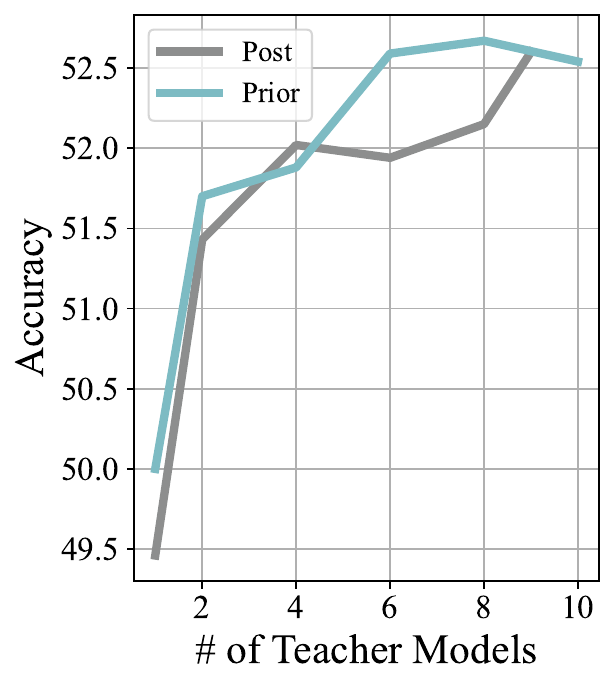}
    \caption{}
    \label{fig:ablation2}
\end{subfigure}
  \begin{subfigure}{0.18\linewidth}
    \includegraphics[width=\linewidth]{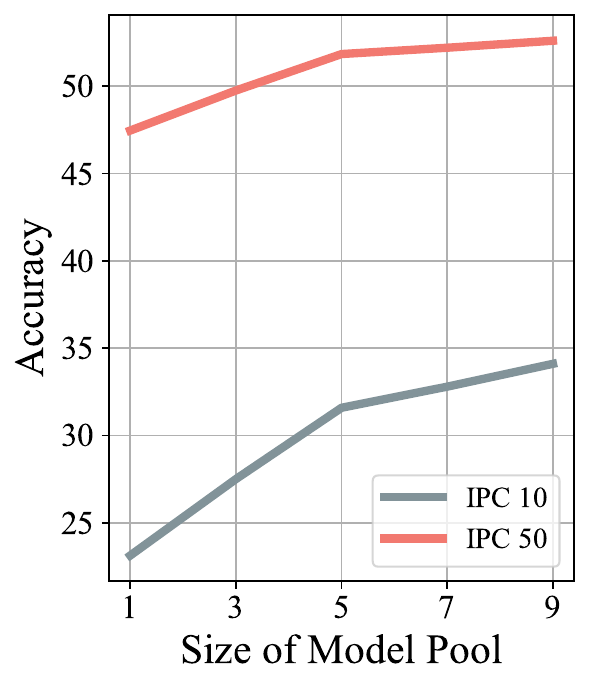}
    \caption{}
    \label{fig:poolsize}
  \end{subfigure}
  \begin{subfigure}{0.18\linewidth}
    \includegraphics[width=\linewidth]{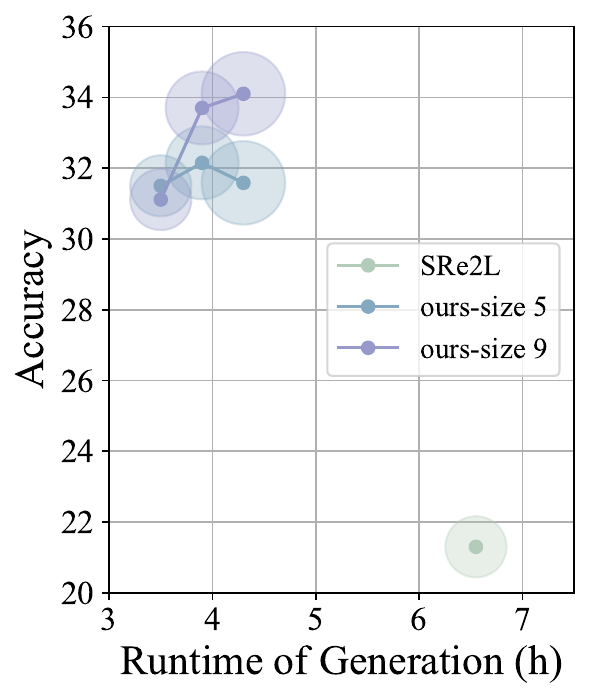}
    \caption{}
    \label{fig:efficiency}
  \end{subfigure}
  \begin{subfigure}{0.18\linewidth}
    \includegraphics[width=\linewidth]{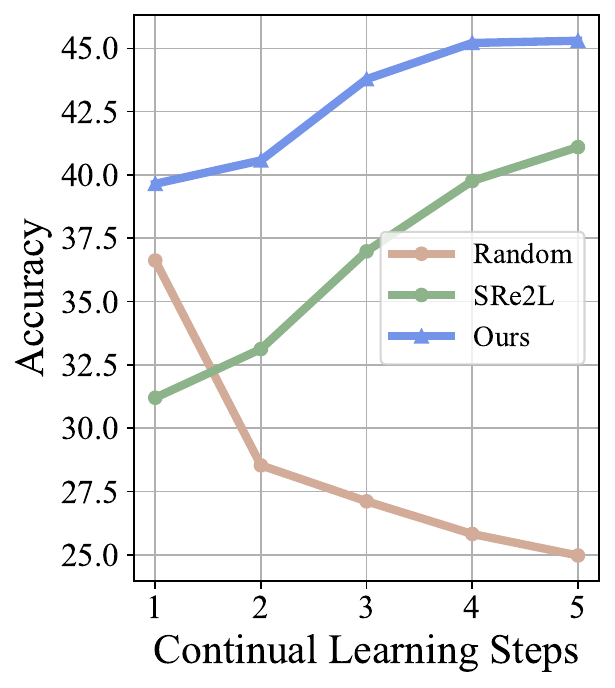}
    \caption{}
    \label{fig:continual}
\end{subfigure}
  \caption{(a) Ablation study on different number of models ensemble to generate synthetic data. (b) Ablation study on different number of models ensemble to generate soft label. (c) Ablation study on size of the model pool under the setting of ImageNet-1K IPC 10 and IPC 50 with the prior-generation strategy. (d) The time and memory requirement of our method compared with the previous SOTA. Here the size of the points represents the peak GPU memory, and the three points, from left to right, report the evaluation results of 1, 2, and 3 teacher models ensemble utilized in generating the synthetic data. (e) Continual learning on Tiny-ImageNet IPC 50 with 5-step incremental protocol.}
\end{figure*}

\subsubsection{Effect of Model Pool} 
We utilize the model pool of weak teachers as a substitute for students, reducing computational expenses. 
To analyze effectiveness, we conduct separate experiments for data generation and relabel phases.

For the data generation part, we conduct experiments on the setting of ImageNet-1K with IPC 10, and the setting of the model pool is the same as the main table.
For data generation, 1, 2, 3, 4, or 5 models will be randomly ensemble selected. 
The soft labels are generated by the whole model pool.
The results in Fig.~\ref{fig:ablation1} reveal that the performance is better with multiple models than with a single model. The accuracy drops in Fig.~\ref{fig:ablation2} can be attributed to the trade-off between trying fit numerous models comprehensively or concentrating on fitting fewer ones. Also, our method exhibits robustness in terms of the number of models selected for generation.

As for the relabel part, we conduct experiments on the setting of ImageNet-1K with IPC 50, and the setting of the model pool is the same as the main table.
For data generation, 3 models are randomly selected from the model pool. Those generated data will be ensemble relabeled by 1, 2, 4, 6, 8, or whole models of the model pool. The evaluation results are shown in Fig.~\ref{fig:ablation2}. We observe a significant enhancement in performance with ensemble relabeling, but also robust for the number of models selected for relabeling.
\subsubsection{Effect of Weak Teacher}
We conduct ablation studies on the stage of weak teachers adopted in our method under the setting of ImageNet-1K with IPC 10, IPC 50, and IPC 100. The results are shown in Fig.~\ref{fig:motivate}. Here, we generate the model pool with weak teachers in different stages. 
We cache 9 models for each trajectory segment with step 5. For data generation, we randomly select 3 out of the model pool and use the whole models in the model pool to relabel the synthetic data. From the results, it is evident that the performance of teacher models significantly impacts the distilled data. For smaller IPCs, adopting the weak teacher models of the early stage is necessary.

\subsubsection{Size of the Model Pool}
We also perform an ablation study on the size of the model pool, and the results are shown in Fig.~\ref{fig:poolsize}. The experiments are under the setting of ImageNet-1K IPC 10 and IPC 50. 
For one experiment setting, we cache the teacher models from the same trajectory segments; the beginning and ending points are the same as the main table but with different step lengths.
Here, we randomly select 3 models from the model pool to generate data, and the whole model pool is adopted to generate the soft label. 
From the results, we observe that the size of the model pool does not significantly impact the performance of the generated synthetic data when the size is larger than 5. In this setting, a smaller size of the model pool 
may lead to an issue with mode collapse and a subsequent decline in performance.

\subsection{Efficiency Evaluation}
In this section, we evaluate the efficiency of our proposed method. The experiments are under the setting of ImageNet-1K IPC 10, utilizing 8 RTX 4090 GPUs. 
Here, the cached teachers are at stages 1 to 41.
We report the total run time of model pre-training and data synthesizing, and the peak GPU memory of the data synthesizing process. The evaluation results are shown in Fig.~\ref{fig:efficiency}. 
From the results, due to the fact that we only need to train teacher models for a limited number of epochs~(which is the major time overhead), the total time required for our method is significantly less than that of SRe$^2$L. Noticing that when utilizing a single teacher model, the required GPU memory is the same as SRe$^2$L, and the performance of the distilled data is notably superior to SRe$^2$L.

\subsection{Continual Learning}
Continual learning~(CL)~\cite{de2021continual,wang2024comprehensive,rebuffi2017icarl} focuses on learning tasks sequentially while prior task data is unavailable. In order to prevent catastrophic forgetting, one typical strategy is using a small buffer to retain critical information from past tasks. In this case, the capacity of dataset distillation to retain the information of the original datasets within a confined storage space benefits practical applications in continual learning. 

Here, we follow the previous work~\cite{zhao2023datasetd} employing the GDumb~\cite{prabhu2020gdumb} as the base method.
This method stores and combines past and new training data to train a model from scratch.
We evaluate our proposed method on Tiny-ImageNet with IPC 50 following the 5-step class-incremental protocol. The results are shown in Fig.~\ref{fig:continual}.
Our method significantly outperforms the baseline methods and demonstrates an accuracy trend that differs from random selection, gradually increasing with the addition of categories. This trend is attributed to the soft label strategy, which allows for incorporating more cross-class knowledge.

\begin{figure}[t]
  \centering
   \includegraphics[width=\linewidth]{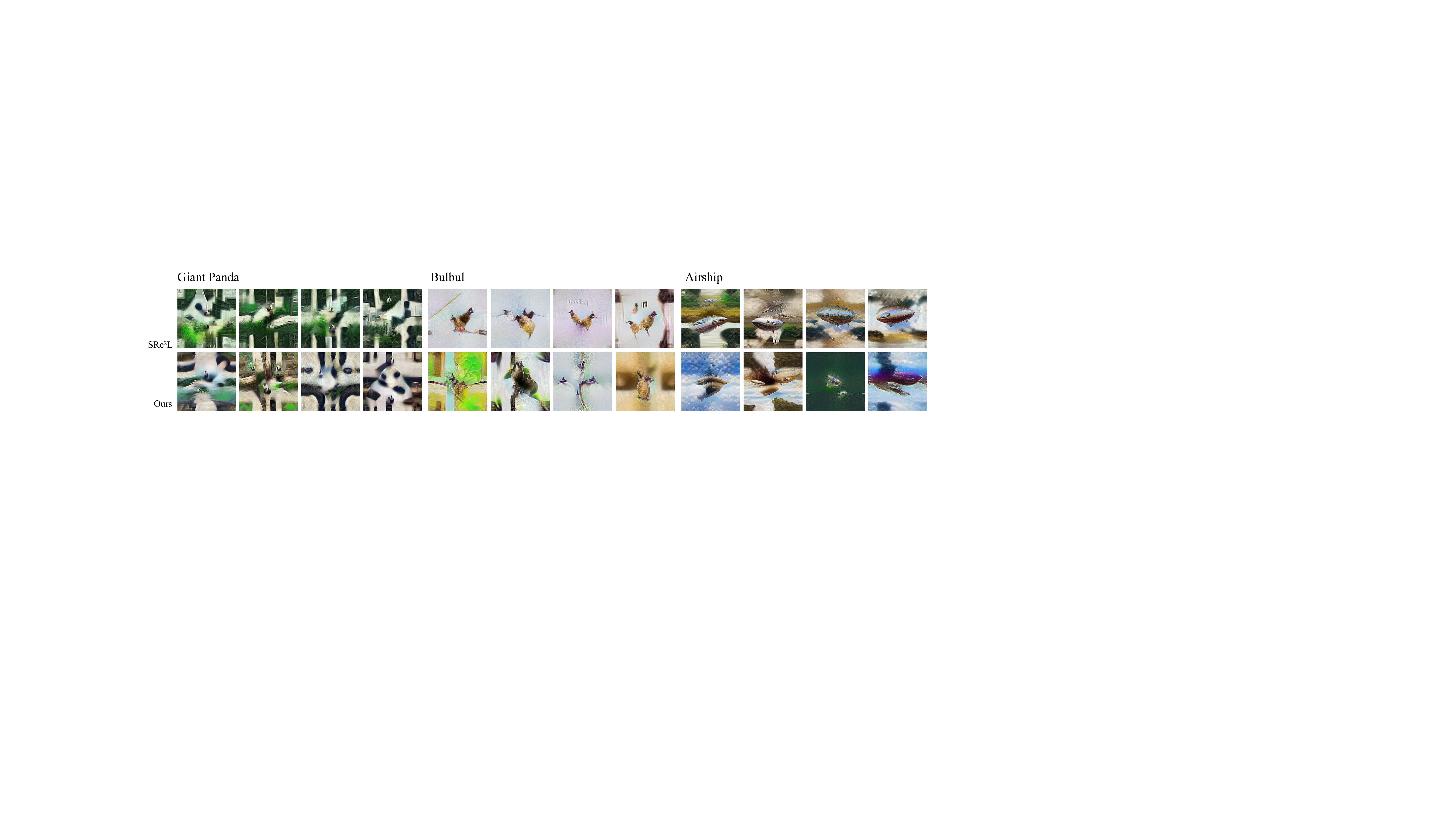}
   \caption{Visualization of synthetic data generated by SRe$^2$L and our method. The first row is generated by SRe$^2$L, and the second row is generated by ours with the prior-generation strategy. Here, we choose three classes from ImageNet-1K: Giant Panda, Bulbul, and Airship.}
   \label{fig:vis}
\end{figure}
\subsection{Visualization}
In this section, we present the visualization of ImageNet-1K synthetic datasets with IPC 50 setting generated by our method and SRe$^2$L~\cite{yin2023squeeze}. Fig.~\ref{fig:vis} shows that our method generates high quality, more distinctive features and a more comprehensive range of variations data, showing more perspective of the original dataset. In comparison, images generated from SRe$^2$L, while realistic, face the problem of mode collapse, especially for large IPCs.

\section{Conclusion}
In this paper, we introduce Teddy, an efficient large-scale dataset distillation method that can handle the full-size ImageNet-1K. 
Back up with theoretical analysis, we propose a memory-efficient approximation derived from Taylor expansion to decouple the bi-level optimization of the original DD into a first-order one. 
To further improve the time efficiency, instead of training a novel model for each synthetic data update, we propose two efficient model pool generation strategies, the prior and post-generation, corresponding to two practical cases. 
Both strategies can generate the model pool from one single base model.
Lastly, we adopt ensemble soft relabeling to improve the generalizability of the distilled data. 
Additionally, to the best of our knowledge, we are the first to build a unified theoretical framework of the DD, linking the existing mainstream DD methods.
Various experiments and ablation studies show the effectiveness of our proposed method, outperforming the prior state-of-the-art DD methods by a significant margin.

\section*{Acknowledgements}
This project is supported by the Ministry of Education, Singapore, under its Academic Research Fund Tier 2 (Award Number: MOE-T2EP20122-0006),
and the National Research Foundation, Singapore, under its AI Singapore Programme (AISG Award No: AISG2-RP-2021-023).

%
%
\bibliographystyle{splncs04}
\bibliography{main}
\clearpage
\appendix
\section{Comprehensive Theoretical Analysis}
Given the original dataset, denoted as $\mathcal{T}=(X_t, Y_t)$, where $X_t \in \mathbb{R}^{N_t\times d}$ and $Y_t \in \mathbb{R}^{N_t \times c}$, DD aims to learn a much smaller synthetic dataset $\mathcal{S}=(X_s, Y_s)$, where $X_s \in \mathbb{R}^{N_s\times d}$ and $Y_s \in \mathbb{R}^{N_s \times c}$, such that models trained on $\mathcal{S}$ and $\mathcal{T}$ are of comparable performance.
Here, $N_t$ and $N_s$ are the number of data in $\mathcal{T}$ and $\mathcal{S}$, respectively, and $N_s \ll N_t$. $d$ is number of features for each data, and $c$ represents the number of classes for classification task. 

For better generalization ability of $\mathcal{S}$, it always adopts multiple inner loops to train a novel student model. Considering the small size of $\mathcal{S}$, models trained on it can easily overfit. Thus, we have $l(\mathcal{S};\theta^{(T)}_{\mathcal{S}})<\epsilon$, where $l(\cdot;\theta)$ is the loss function, $T$ is the number of inner loops, and $\epsilon$ is close to zero.
With the Karush-Kuhn-Tucker~(KKT) conditions, the problem can be formulated as:
\begin{equation}
    \begin{split}
    \mathcal{S} &= \mathop{\arg\min}\limits_{\mathcal{S}}\mathbb{E}_{\theta^{(0)}\sim\Theta}[l(\mathcal{T};\theta^{(T)}_{\mathcal{S}}) +u( l(\mathcal{S};\theta^{(T)}_{\mathcal{S}})-\epsilon)] \\    &=\mathop{\arg\min}\limits_{\mathcal{S}}\mathbb{E}_{\theta^{(0)}\sim\Theta}[l(\mathcal{T};\theta^{(T)}_{\mathcal{S}})+ul(\mathcal{S};\theta^{(T)}_{\mathcal{S}})],\\
\end{split}
\end{equation}
where $u$ is the Lagrange multiplier, $\Theta$ is the distribution of networks, and 
\begin{equation}
\begin{split}
    \theta^{(t)}_{\mathcal{S}} &= \theta^{(0)} -\alpha\sum_{i=0}^{t-1}g_{\mathcal{S}}^{(i)} = \theta^{(t-1)}_{\mathcal{S}}-\alpha g^{(t-1)}_{\mathcal{S}},\\
    g_{\mathcal{S}}^{(t)}&=\nabla_{\theta^{(t)}_{\mathcal{S}}} l(\mathcal{S}; \theta^{(t)}_{\mathcal{S}}).
\end{split}
\end{equation}
Following the Proposition~1, we recursively apply the first-order Taylor expansion to the first term of the optimization objective, and it can be transformed into follows:
\begin{equation}
\label{eq:iter-tay}
    \begin{split}
        l(\mathcal{T};\theta^{(T)}_{\mathcal{S}}) 
     &= l(\mathcal{T};\theta^{(T-1)}_{\mathcal{S}}-\alpha g_{\mathcal{S}}^{(T-1)}) \\
     &= l(\mathcal{T};\theta^{(T-1)}_{\mathcal{S}})-\alpha g_{\mathcal{T}}^{(T-1)}\cdot g_{\mathcal{S}}^{(T-1)} \\
     &= l(\mathcal{T};\theta^{(T-i)}_{\mathcal{S}})-\alpha \sum_{t=T-i}^{T-1}g_{\mathcal{T}}^{(t)}\cdot g_{\mathcal{S}}^{(t)} \\
     &= l(\mathcal{T};\theta^{(0)})-\alpha \sum_{t=0}^{T-1}g_{\mathcal{T}}^{(t)}\cdot g_{\mathcal{S}}^{(t)},
    \end{split}
\end{equation}
where $g_{\mathcal{T}}^{(t)}=\nabla_{\theta^{(t)}_{\mathcal{S}}} l(\mathcal{T}; \theta^{(t)}_{\mathcal{S}})$. The Taylor-approximated version of the optimization objective is as follows:
\begin{equation}
    \begin{split}
    \mathcal{S}&=\mathop{\arg\min}\limits_{\mathcal{S}}[-\alpha \sum_{t=0}^{T-1}g_{\mathcal{T}}^{(t)}\cdot g_{\mathcal{S}}^{(t)}+ul(\mathcal{S};\theta^{(T)}_{\mathcal{S}})]\\
    &=\mathop{\arg\min}\limits_{\mathcal{S}}[-\sum_{t=0}^{T-1}\frac{g_{\mathcal{T}}^{(t)}\cdot g_{\mathcal{S}}^{(t)}}{||g_{\mathcal{T}}^{(t)}||||g_{\mathcal{S}}^{(t)}||}+u^{'} l(\mathcal{S};\theta^{(T)}_{\mathcal{S}})],
\end{split}
\label{eq:taylor-obj}
\end{equation}
where $\theta^{(0)}\sim\Theta$, and $u^{'}=\frac{u}{\alpha||g_{\mathcal{T}}^{(t)}||||g_{\mathcal{S}}^{(t)}||}$. The first term of the Eq.~\ref{eq:taylor-obj} is the sum of the cosine distance of the gradients for the synthetic and original data, respectively, on the all checkpoints of the student trajectory. In other words, the meta-learning-based optimization objective can be Taylor-approximated as the sum of the multi-step gradient matching. Here, we denote $e^{(t)}=\frac{g^{(t)}}{||g^{(t)}||}$, and for each gradient matching at step $t$, we have:
\begin{equation}
    \begin{split}
        - e^{(t)}_{\mathcal{T}}\cdot e^{(t)}_{\mathcal{S}}=\frac{1}{2}||e^{(t)}_{\mathcal{T}} - e^{(t)}_{\mathcal{S}}||^2-1.
    \end{split}
\end{equation}
So for each step of gradient matching, we only need to optimize the $l^2$ distance between the gradients of the original data and the synthetic data, such that:
\begin{equation}
    \begin{split}
        \mathcal{S}&=\mathop{\arg\min}\limits_{\mathcal{S}}[\frac{1}{2}\sum_{t=0}^{T-1}||e^{(t)}_{\mathcal{T}} - e^{(t)}_{\mathcal{S}}||^2+u^{'}l(\mathcal{S};\theta^{(T)}_{\mathcal{S}})].\\
    \end{split}
\label{eq:e-dis}
\end{equation}
However, the Eq.~\ref{eq:e-dis} shows that each update step of $\mathcal{S}$ should compute the gradient of $\mathcal{S}$ and $\mathcal{T}$ on all checkpoints of the student trajectory, which is a large amount of computational costs. Here, we reduce the costs by adopting another approximation strategy, turning the second-order optimization into the first-order one. 
Following the Proposition~2, for simplicity in explanation, we only consider the last linear layer of the network is updated during the generation process, and the parameter is denoted as $W$. The preceding layers are regraded as the feature extractor, denoted as $f_{\theta}$. For each gradient, we have:
\begin{equation}
\begin{split}
    g &= \frac{\partial l(\cdot;\theta)}{\partial W} =\frac{1}{|X|} f_{\theta}(X)^T(f_{\theta}(X)W-Y)\\
      &= \frac{1}{|X|}f_{\theta}(X)^Tf_{\theta}(X)W - \frac{1}{|X|}f_{\theta}(X)^TY.
\end{split}
\label{eq:gradtosec}
\end{equation}
The first term of Eq.~\ref{eq:gradtosec} is a weighted covariance matrix of the feature space, and the second term is the class-wise mean of the feature space. The gradient matching for $l^2$ distance can be transformed as:
\begin{equation}
    \begin{split}
        &||(\frac{1}{N_t}f_{\theta}(X_t)^Tf_{\theta}(X_t)-\frac{1}{N_s}f_{\theta}(X_s)^Tf_{\theta}(X_s))W \\
        &- (\frac{1}{N_t}f_{\theta}(X_t)^TY - \frac{1}{N_s}f_{\theta}(X_s)^TY)||^2\frac{1}{||W||^2}\\
        \leq& ||\frac{1}{N_t}f_{\theta}(X_t)^Tf_{\theta}(X_t)-\frac{1}{N_s}f_{\theta}(X_s)^Tf_{\theta}(X_s)||^2\\
        &+||\frac{1}{N_t}f_{\theta}(X_t)^TY - \frac{1}{N_s}f_{\theta}(X_s)^TY||^2\frac{1}{||W||^2}.
    \end{split}
    \label{eq:gradupper}
\end{equation}
From Eq.~\ref{eq:gradupper}, it shows that the upper bound of the gradient matching is the first-order and the second-order statistic information matching in feature space. Here, we need to consider the prior condition such that $\theta^{(0)}\sim\Theta$.
It means that for every $\theta^{(0)}$ samples from the distribution $\Theta$, Eq.~\ref{eq:gradupper} reaches the minimum value. In this case, the first term of Eq.~\ref{eq:gradupper}, the covariance for the original data and synthetic data, is equivalent for all $\theta^{(0)}$ samples, and the class-wise mean. Also, the equality in Eq.~\ref{eq:gradupper} holds. As for $\theta^{(t)}$, as long as the samples are sufficient, the condition still holds. To be more specific, here we assume $f_{\theta}(X_t)\in\mathbb{R}^{N_t\times f_d}$, and $f_{\theta}(X_s)\in\mathbb{R}^{N_s\times f_d}$, $f_d$ is the dimension of the feature. 
$f_{\theta}(X_t)=[f_{\theta}(x^t_1)^T,\dots,f_{\theta}(x^t_{N_t})^T]^T$, and $f_{\theta}(x^t_i)\in\mathbb{R}^{f_d}$. Also, $f_{\theta}(X_s)=[f_{\theta}(x^s_1)^T,\dots,f_{\theta}(x^s_{N_s})^T]^T$, and $f_{\theta}(x^s_i)\in\mathbb{R}^{f_d}$. $f_{\theta}(X_t)=[F^t_1,\dots,F^t_{f_d}]$, $F^t_i\in\mathbb{R}^{N_t}$, and $f_{\theta}(X_s)=[F^s_1,\dots,F^s_{f_d}]$, $F^s_i\in\mathbb{R}^{N_s}$. For the first term of Eq.~\ref{eq:gradupper}, we have:
\begin{equation}
    \begin{split}
        &\frac{1}{N_t}f_{\theta}(X_t)^Tf_{\theta}(X_t)-\frac{1}{N_s}f_{\theta}(X_s)^Tf_{\theta}(X_s)\\
        =&\frac{1}{N_t}[(F^t_1)^T,\dots,(F^t_{f_d})^T]^T[F^t_1,\dots,F^t_{f_d}]\\
        &-\frac{1}{N_s}[(F^s_1)^T,\dots,(F^s_{f_d})^T]^T[F^s_1,\dots,F^s_{f_d}]\\
        =&\begin{bmatrix}
            Var(F^t_1)&\cdots&Cov(F^t_1, F^t_{f_d})\\
            \vdots&\ddots&\vdots\\
            Cov(F^t_{f_d}, F^t_1)&\cdots&Var(F^t_{f_d})\\
        \end{bmatrix}\\
        &-\begin{bmatrix}
            Var(F^s_1)&\cdots&Cov(F^s_1, F^s_{f_d})\\
            \vdots&\ddots&\vdots\\
            Cov(F^s_{f_d}, F^s_1)&\cdots&Var(F^s_{f_d})\\
        \end{bmatrix}.\\
    \end{split}
    \label{eq:unfold}
\end{equation}
Eq.~\ref{eq:unfold} shows that if the covariance matching holds, then variance matching also holds. To improve the calculation efficiency, here we adopt variance matching instead of covariance matching. As for the second term, it is the class-wise mean matching in feature space. Specifically, we have: 
\begin{equation}
    \begin{split}
        &\frac{1}{N_t}f_{\theta}(X_t)^TY - \frac{1}{N_s}f_{\theta}(X_s)^TY\\
        =&\frac{1}{N_t}[f_{\theta}(x^t_1),\dots,f_{\theta}(x^t_{N_t})]\mathcal{M}_c^t\\
        &-\frac{1}{N_s}[f_{\theta}(x^s_1),\dots,f_{\theta}(x^s_{N_s})]\mathcal{M}_c^s.
    \end{split}
    \label{eq:classmean}
\end{equation}
Here, $\mathcal{M}_c^t\in\mathbb{R}^{N_t\times c}$ and $\mathcal{M}_c^s\in\mathbb{R}^{N_s\times c}$, are the matrices to indicate whether the samples belong to the classes. For instance, $M_{ij}=1$ means the $i^{th}$ sample is belong to the $j^{th}$ class, otherwise, it is not. The Eq.~\ref{eq:classmean} can be:
\begin{equation}
\label{eq:balance}
\begin{split}
    &\frac{1}{N_t}[\sum_{i=1}^{N_t}f_{\theta}(x^t_i)M_{i1}^t,\dots,\sum_{i=1}^{N_t}f_{\theta}(x^t_i)M_{ic}^t]\\
    &-\frac{1}{N_s}[\sum_{i=1}^{N_s}f_{\theta}(x^s_i)M_{i1}^s,\dots,\sum_{i=1}^{N_s}f_{\theta}(x^s_i)M_{ic}^s]\\
    =&\frac{1}{N_t}[\sum_{x_i^t\in Cl_1}f_{\theta}(x^t_i),\dots,\sum_{x_i^t\in Cl_c}f_{\theta}(x^t_i)]\\
    &-\frac{1}{N_s}[\sum_{x_i^s\in Cl_1}f_{\theta}(x^s_i),\dots,\sum_{x_i^s\in Cl_c}f_{\theta}(x^s_i)],
\end{split}
\end{equation}
where $Cl_i$ is the set of samples belong to $i^{th}$ class. When the classes of the original dataset is balanced, or the number of samples in every class of the original dataset is same, the second term of Eq.~\ref{eq:gradupper} can be replaced by the global mean.
Therefore, the Eq.~\ref{eq:e-dis} can be transformed as follows:
\begin{equation}
\begin{split}
    &\sum_{t=0}^{T-1}(\sum_l||\mu_l(f_{\theta_{\mathcal{S}}^{(t)}}(X_s))-\mu_l(f_{\theta_{\mathcal{S}}^{(t)}}(X_t))||_2 \\
     &+\sum_l ||\sigma_l^2(f_{\theta_{\mathcal{S}}^{(t)}}(X_s))-\sigma_l^2(f_{\theta_{\mathcal{S}}^{(t)}}(X_t))||_2) \\
     &+u\cdot l(\mathcal{S};\theta^{(T)}_{\mathcal{S}}),
\end{split}
\label{eq:stu}
\end{equation}
where $\theta^{(0)}\sim\Theta$, $\mu_l$ and $\sigma^2_l$ refer to the mean and variance of the $l^{th}$ layer features. Although Eq.~\ref{eq:stu} has significantly improved efficiency, computing the mean and variance of checkpoints at each inner loop remains a substantial computational overhead, especially when there are numerous inner loops and a relatively large original dataset. Here, we propose substituting performance-comparable weak teachers for student models and reducing the number of rounds required for inner loop.
As for the student trajectory starting at $t$ and ending at $t+m$, we have:
\begin{equation}
    \begin{split} l(\mathcal{T};\theta_{\mathcal{S}}^{(t+m)})&=l(\mathcal{T};\theta_{\mathcal{S}}^{(t)}-\alpha\sum_{i=t}^{t+m-1}g_{\mathcal{S}}^{(i)}) \\
    &=l(\mathcal{T};\theta^{(t)}_{\mathcal{S}})-\alpha (\sum_{i=t}^{t+m-1}g_{\mathcal{S}}^{(i)})\cdot g_{\mathcal{T}}^{(t)}.
    \end{split}
    \label{eq:multi-step2}
\end{equation}
From Eq.~\ref{eq:multi-step2}, it shows that the multi-step gradient of the model on the synthetic dataset compared to the single-step gradient on the original dataset. 
Therefore, we cache the weak teacher for every $m$ steps from $T_b$ to $T_e$. 
Here, we also utilize the running mean and running variance stored in the batch normalization layers of weak teachers to replace the mean and variance of the original dataset in feature space, reducing computational costs. The optimization objective is as follows:
\begin{equation}
\begin{split}
    &\sum_{t=T_b}^{T_e}(\sum_l||\mu_l(f_{\theta_{\mathcal{T}}^{(t)}}(X_s))-RM_{\theta^{(t)}_{\mathcal{T}}}^l(X_t))||_2 \\
     &+\sum_l ||\sigma_l^2(f_{\theta_{\mathcal{T}}^{(t)}}(X_s))-RV_{\theta^{(t)}_{\mathcal{T}}}^l(X_t))||_2 \\
     &+u\cdot l(\mathcal{S};\theta^{(t)}_{\mathcal{T}})),
\end{split}
\end{equation}
where $T_b$ and $T_e$ is the starting checkpoint and the end point of the teacher training trajectory.

\section{Error Analysis for Taylor Approximation}
In Eq.~\ref{eq:iter-tay}, we recursively apply the first-order Taylor expansion to the original optimization objective to decouple the bi-level optimization process. This section will further analyze the error caused by Taylor approximation part, and theoretical reasonability.

\begin{prop}
    Taylor approximation minimizes the upper bound of the original loss function. 
\end{prop}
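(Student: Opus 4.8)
The plan is to upgrade the chain of first-order Taylor \emph{equalities} in Eq.~\ref{eq:iter-tay} into a chain of rigorous \emph{inequalities}, so that the discarded second-order remainder reappears as an explicit nonnegative term that we can control. The tool is the descent lemma: assuming $l(\mathcal{T};\cdot)$ has an $L$-Lipschitz gradient in a neighborhood of the student trajectory, the quadratic upper bound
\[
l(\mathcal{T};\theta-\alpha g_{\mathcal{S}}) \le l(\mathcal{T};\theta) - \alpha\, g_{\mathcal{T}}\cdot g_{\mathcal{S}} + \frac{L\alpha^2}{2}\|g_{\mathcal{S}}\|^2
\]
replaces the one-step approximation used in the proof of Proposition~1. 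The gradient-alignment term $-\alpha\,g_{\mathcal{T}}\cdot g_{\mathcal{S}}$ is reproduced exactly, and the Taylor truncation error is absorbed into the nonnegative term $\tfrac{L\alpha^2}{2}\|g_{\mathcal{S}}\|^2$.

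First I would apply this inequality at each inner-loop step, using $\theta_{\mathcal{S}}^{(t+1)}=\theta_{\mathcal{S}}^{(t)}-\alpha g_{\mathcal{S}}^{(t)}$ and reading the cross term as $g_{\mathcal{T}}^{(t)}\cdot g_{\mathcal{S}}^{(t)}$, exactly as in Eq.~\ref{eq:iter-tay}. Summing the per-step bounds telescopically over $t=0,\dots,T-1$ then yields
\[
l(\mathcal{T};\theta_{\mathcal{S}}^{(T)}) \le l(\mathcal{T};\theta^{(0)}) - \alpha\sum_{t=0}^{T-1} g_{\mathcal{T}}^{(t)}\cdot g_{\mathcal{S}}^{(t)} + \frac{L\alpha^2}{2}\sum_{t=0}^{T-1}\|g_{\mathcal{S}}^{(t)}\|^2 .
\]
The right-hand side is precisely the quantity whose dominant part the Taylor-approximated objective of Eq.~\ref{eq:taylor-obj} minimizes: the constant $l(\mathcal{T};\theta^{(0)})$ is irrelevant to $\mathcal{S}$, the linear term is exactly the multi-step gradient matching, and the quadratic term is the aggregate remainder.

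To conclude, I would argue that minimizing the first-order objective drives down this rigorous upper bound on $l(\mathcal{T};\theta_{\mathcal{S}}^{(T)})$. For small learning rate $\alpha$ the remainder is $O(\alpha^2)$, hence subdominant relative to the $O(\alpha)$ alignment term, so the minimizer of the first-order objective and the minimizer of the full bound agree to leading order. A cleaner special case arises after passing to the normalized directions $e^{(t)}=g^{(t)}/\|g^{(t)}\|$ of Eq.~\ref{eq:e-dis}: if the inner loop is read as normalized gradient descent, consistent with the direction-only matching there, the remainder contributes $\tfrac{L\alpha^2}{2}\sum_t\|e_{\mathcal{S}}^{(t)}\|^2 = \tfrac{L\alpha^2 T}{2}$, a constant independent of $\mathcal{S}$, and minimizing the upper bound becomes \emph{exactly} equivalent to maximizing the summed cosine alignment, i.e.\ to solving Eq.~\ref{eq:e-dis}. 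Either way the Taylor-approximated matching objective bounds the original meta-loss from above up to an $\mathcal{S}$-independent constant, which is the claim.

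The main obstacle is justifying the smoothness constant $L$ uniformly along the trajectory: neural-network losses are nonconvex and only locally Lipschitz-smooth, so the descent-lemma inequality holds with a trajectory-dependent constant that must be tamed, for instance by restricting to the compact region visited by the weak-teacher checkpoints $\theta_{\mathcal{T}}^{(t)}$ for $t\in[T_b,T_e]$ and taking $L$ to be the supremum of the Hessian spectral norm there. A secondary subtlety is that the Hessian in Taylor's remainder is evaluated at an intermediate point rather than at $\theta_{\mathcal{S}}^{(t)}$; the $L$-smoothness estimate is exactly what replaces that unknown point by a uniform constant, so once $L$ is fixed the telescoping sum goes through without further approximation.
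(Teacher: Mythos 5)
Your proof is correct (modulo the smoothness caveats you flag yourself), but it takes a genuinely different route from the paper's. The paper never invokes the descent lemma or telescoping: it writes the real-data loss as a residual norm $\|f(X_t;\theta_{\mathcal{S}}^{(t)})-Y_t\|$, applies the triangle inequality against a \emph{teacher} checkpoint $\theta_{\mathcal{T}}^{(t)}$, and uses Lipschitzness of the network output in its parameters to bound the loss by $\beta\|\theta_{\mathcal{S}}^{(t)}-\theta_{\mathcal{T}}^{(t)}\|$ plus the teacher's own loss (a constant in $\mathcal{S}$); since student and teacher step from a common point, that parameter distance is the per-step gradient mismatch $\|g_{\mathcal{S}}^{(t-1)}-g_{\mathcal{T}}^{(t-1)}\|$, which the paper then identifies with the gradient-matching term (loosely --- its final ``equality'' between the norm distance and the dot product only holds up to normalization, as in Eq.~\ref{eq:e-dis}). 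So the paper's bound needs only zeroth-order (function-space) Lipschitzness and ties the objective directly to student--teacher gradient matching, which dovetails with the weak-teacher model-pool design; your bound instead requires Lipschitz gradients of $l(\mathcal{T};\cdot)$ along the trajectory, but in exchange it reproduces the exact multi-step Taylor objective of Eq.~\ref{eq:taylor-obj} as the dominant term of a rigorous bound, makes the truncation error explicit as $\frac{L\alpha^2}{2}\sum_{t}\|g_{\mathcal{S}}^{(t)}\|^2$, and, under the normalized-gradient reading, renders that remainder an $\mathcal{S}$-independent constant so the equivalence becomes exact. Your version is arguably the sharper error analysis (which is the stated purpose of this appendix section) and avoids the paper's abuse of notation in its last line; the paper's version is the one that additionally motivates substituting weak teachers for student checkpoints.
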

\begin{proof}
Considering each Taylor approximation iteration independently is as follows: 
\begin{equation}
\begin{split}
    l(\mathcal{T};\theta_{\mathcal{S}}^{(t)})&=||f(X_t;\theta_{\mathcal{S}}^{(t)})-Y_t|| \\
    &\leq||f(X_t;\theta_{\mathcal{S}}^{(t)})-f(X_t;\theta_{\mathcal{T}}^{(t)})||+||f(X_t;\theta_{\mathcal{T}}^{(t)})-Y_t|| \\
    &\leq \beta||\theta_{\mathcal{S}}^{(t)}-\theta_{\mathcal{T}}^{(t)}||+||f(X_t;\theta_{\mathcal{T}}^{(t)})-Y_t|| \\
    &=\beta||g_{\mathcal{S}}^{(t-1)}-g_{\mathcal{T}}^{(t-1)}|| + C \\
    &=\beta g_{\mathcal{S}}^{(t-1)}\cdot g_{\mathcal{T}}^{(t-1)} + C,
\end{split}
\end{equation}
where $\beta$ is the Lipschitz constant and $C$ is irrelevant to the optimization. 
\end{proof}

We also conduct validation experiments under the setting of CIFAR10 IPC 5 with the original DD optimization objective and our Taylor-approximation version. The results are shown in Fig.~\ref{fig:R1}. We evaluate the difference of average losses, average accuracy per iteration, and peak accuracy during training. The results demonstrate that the bound is tight in practice, and indicate that errors introduced by our approximation are negligible and the overall training dynamics are comparable.

\begin{figure}[t]
   \includegraphics[width=\linewidth]{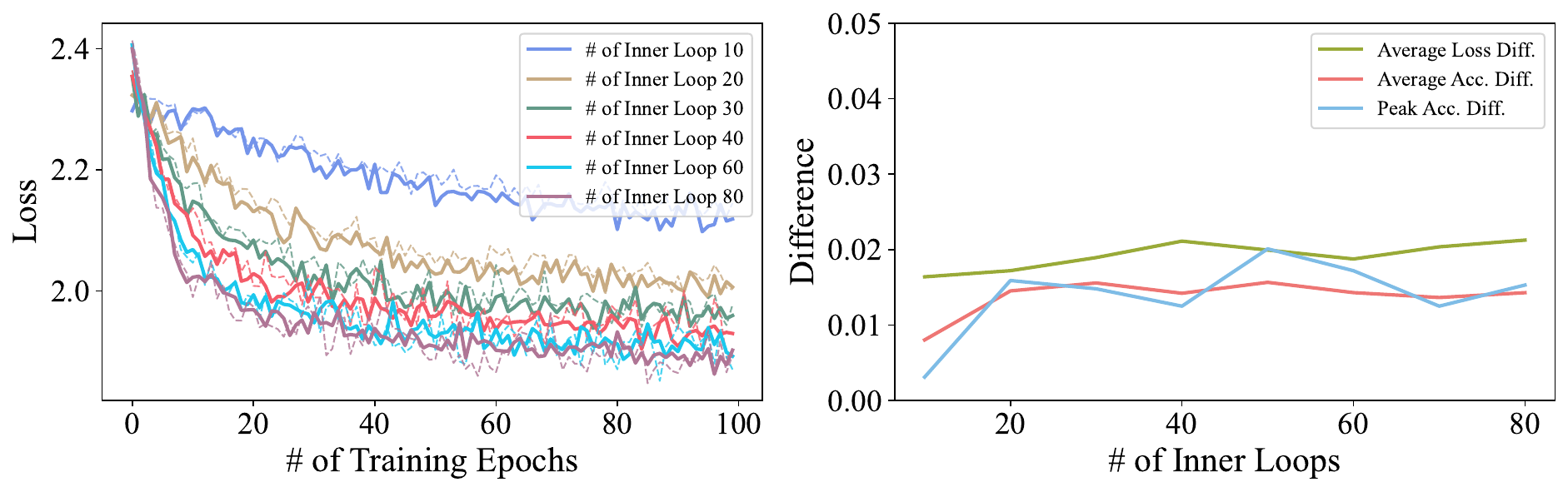}
   \caption{Left: training loss of the original DD~(dashed line) and our approximated objectives~(solid line); Right: the difference of average loss, average accuracy and peak accuracy during training.}
   \label{fig:R1}
\end{figure}

\begin{table}[!t]
    \centering
    \renewcommand{\arraystretch}{0.8}
    \resizebox{\linewidth}{!}{
    \begin{tabular}{c c c c }
        \toprule
           \bf Imbalanced dataset Acc.(\%) / F1(\%) & \bf SRe$^2$L & \bf Ours & \bf Ours with Class-wise Statistic \\
            \midrule
            \bf 60\%$\sim$100\% & 24.2 / 24.2 & \bf 32.7~(\textcolor{blue}{+ 8.5}) / 32.5~(\textcolor{blue}{+ 8.3}) & \bf 36.0~(\textcolor{blue}{+ 11.8}) / 36.0~(\textcolor{blue}{+ 11.8})\\
            \bf 40\%$\sim$100\% & 23.4 / 23.0 & \bf 33.6~(\textcolor{blue}{+ 10.2}) / 33.3~(\textcolor{blue}{+ 10.3}) & \bf 35.6~(\textcolor{blue}{+ 12.2}) / 35.3~(\textcolor{blue}{+ 12.3})\\       
        \bottomrule
    \end{tabular}
    }
    \caption{Results on imbalanced dataset with each class has 60\%-100\% and 40\%-100\% samples.}
    \label{tab:R2}
\end{table}
\begin{table}[!t]
    \centering
    \renewcommand{\arraystretch}{0.8}
    \resizebox{\linewidth}{!}{
    \begin{tabular}{c c c c  |c c c c}
        \toprule
        \bf Acc.~(\%) & \bf 5 & \bf 10 & \bf 50 & \bf mAP~(\%) & \bf 10 Imgs & \bf 20 Imgs & \bf 40 Imgs\\
        \midrule
        \bf SRe$^2$L & 5.4 & 17.8 & 48.4 & \bf baseline & 3.6 & 5.6 & 9.9\\
        \bf Ours & \bf 21.0~(\textcolor{blue}{+ 15.6}) & \bf 37.2~(\textcolor{blue}{+ 19.4}) & \bf 58.5~(\textcolor{blue}{+ 10.1}) & \bf ours &  6.4~(\textcolor{blue}{+ 2.8}) & \bf 10.4 (\textcolor{blue}{+ 4.8}) & \bf 15.3~(\textcolor{blue}{+ 5.4}) \\
        \bottomrule
    \end{tabular}
    }
    \caption{Results on larger model~(left) and other task~(right).}
    \label{tab:R3}
\end{table}

\section{More Experimental Results}
\subsection{Experimental Results on Imbalanced Dataset}
In Eq.~\ref{eq:balance}, we assume that the dataset is balanced. This section will further analyze this condition and show the robustness of our proposed method. We conduct the experiments under the setting of Tiny-ImageNet IPC 20. For each class of Tiny-ImageNet, we randomly select 60\%$\sim$100\% and 40\%$\sim$100\% of the original daraset to build the imbalanced dataset. The results are shown in the Table~\ref{tab:R2}. Even if imbalanced models are used, our method still demonstrates some resistance to this issue compared with the baseline as shown in Table~\ref{tab:R2}. 
This can be attributed to more informative statistics from diversified teachers for model-to-data synthesis. 
Moreover, the performance could be further improved by considering the class-wise statistics.

\subsection{Experimental Results on Larger Network}
To further demonstrate the effectiveness of our proposed method for larger networks, we conduct the experiments on ResNet50. Here, ResNet50 is utilized as the model pool base architecture, and the downstream network. The results are shown in the Table~\ref{tab:R3}~(left). Our method achieves superior performance for larger models.

\subsection{Experimental Results on Other Task}
Current DD field mainly focuses on classification, with detection and other tasks research still blank.
Here, we simply apply our method on the detection task to show the generalizability of our proposed method across different tasks. We conduct the experiments on Pascal VOC, and the architecture of the detector is Faster RCNN.
Table~\ref{tab:R3}~(right) shows the preliminary results of our method and baseline adapted to object detection on Pascal VOC, demonstrating the potential of our method in wider applications. 

\subsection{Experimental Results on Small Dataset}
Our proposed method, Teddy, is designed mainly for large-scale datasets as we do several Taylor-approximation strategies to improve the efficiency of the original solution to the DD definition, which may cause information loss. However, it still shows quite competitive performance compared with baselines SRe$^2$L and DM on small datasets. Here, we adopt CIFAR-10 and CIFAR-100 for experiments. Specifically, we drop the soft label adopted in both SRe$^2$L and our method for fair comparison. Also, we only use the DSA strategy for data augmentation. The evaluation results are shown in Table~\ref{tab:wosoftlabel}. It shows that our weak-teacher strategy is a more favorable surrogate of the original ones than fully-converged models~(SRe$^2$L) and random initialization without training~(DM).

\section{More Implementation Details}
\subsection{Model Pool Generation}
In our experiments, we have two strategies to generate the model pool: prior-generation and post-generation.
The base network architecture is ResNet18.
For prior-generation, we follow the official torchvision code to train ResNet18 for ImageNet-1K and modified ResNet18 for Tiny-ImageNet. We adopt different stage teacher models for different IPC settings. The size of the model pool is 9 for ImageNet-1K and 8 for Tiny-ImageNet. For more details, please refer to Table~\ref{tab:modelpoold}.

As for post-generation, we utilize the pre-trained ResNet18 provided by the official torchvision for ImageNet-1K and well-trained modified ResNet18 for Tiny-ImageNet. We use DepGraph to perform structural pruning with the random standard, and finetune the pruned models for very limited epochs, \textit{e.g.}, 0-2 epochs, with learning rate from 0.1 to 0.001. It is worth noting that during finetuning, we simulate the accuracy curve of teacher models in trajectory-based model pool. The GFLOPs of the target pruned model is 1.2G, and the number of parameters is 7.72M. The size of model pool is 10 for ImageNet-1K and 9 for Tiny-ImageNet. For more details, please refer to Table~\ref{tab:modelpoold}.

\begin{table}[!t]
    \centering
    \resizebox{\linewidth}{!}{
    \begin{tabular}{c c c c c c c c c c c c}
        \toprule
        \multirow{2}{*}{\bf Method} & & \multicolumn{3}{c}{\bf CIFAR-10} & &\multicolumn{3}{c}{\bf CIFAR-100} \\
    \cmidrule{3-5} \cmidrule{7-9}
    & & 1 & 10 & 50 && 1 & 10 & 50 \\
        \midrule
        \bf DM~(noise) && 25.6 $\pm$ 0.2 & 49.8 $\pm$ 0.3 & 59.5 $\pm$ 0.1 && 11.4 $\pm$ 0.2 & 29.6 $\pm$ 0.3 & 36.9 $\pm$ 0.1 \\
         \bf DM~(real) && 26.0 $\pm$ 0.8 & 48.9 $\pm$ 0.6 & 63.0 $\pm$ 0.4 && 11.4 $\pm$ 0.3 & 29.7 $\pm$ 0.3 & 43.6 $\pm$ 0.4 \\
        \bf SRe$^2$L w/o soft label~(noise) && 24.9 $\pm$ 0.2 & 35.8 $\pm$ 0.4 & 37.0 $\pm$ 0.2 && 10.8 $\pm$ 0.2 & 18.4 $\pm$ 0.1 & 25.6 $\pm$ 0.1 \\
        \bf SRe$^2$L w/o soft label~(real) && 27.1 $\pm$ 0.4  & 44.7 $\pm$ 0.2 & 55.7 $\pm$ 0.2  && 10.7 $\pm$ 0.3 & 24.6 $\pm$ 0.1 & 39.2 $\pm$ 0.1 \\
        \midrule
        \bf Ours w/o soft label~(noise) && 29.2 $\pm$ 0.3 & 51.0 $\pm$ 0.1 & 63.6 $\pm$ 0.1 && 12.5 $\pm$ 0.2 & 30.6 $\pm$ 0.1 & 44.4 $\pm$ 0.2\\
        \bf Ours w/o soft label~(real) && \bf 30.1 $\pm$ 0.4  & \bf 53.0 $\pm$ 0.3 & \bf 66.1 $\pm$ 0.1  && \bf 13.5 $\pm$ 0.2 & \bf 33.4 $\pm$ 0.1  & \bf 49.4 $\pm$ 0.4 \\  
        
        \bottomrule
    \end{tabular}
    }
    \caption{Results on small datasets CIFAR-10 and CIFAR-100. For a fair comparison, we drop the soft label adopted in both SRe$^2$L and our proposed method and only use DSA as the data augmentation strategy.}
    \label{tab:wosoftlabel}
\end{table}

\subsection{Data Generation}
For ImageNet-1K, we randomly select 3 models from the prior-generated model pool or 4 models from the post-generated model pool to generate the $i^{th}$ image for all classes. As for Tiny-ImageNet, we randomly select 3 models from prior-generated model pool or 5 models from the post-generated model pool. The batch size is 100, and the number of synthetic data generation iterations is 6000. For more details, please refer to Table~\ref{tab:trainingd}.

\begin{table}
    \centering
    \resizebox{\linewidth}{!}{
    \begin{tabular}{c l c c c c c}
        \toprule
        & \bf Hypeparameter && \bf Prior-Generation & \bf Post-Generation\\
        \midrule
        \multirow{10}{*}{\rotatebox[origin=c]{90}{\bf Tiny-ImageNet}}
        & Optimizer && SGD & SGD\\
        & Base Learning Rate && 0.2 & 0.1-0.001\\
        & Learning Rate Scheduler && Cosine & Step \\
        & Weight Decay && 1e-4 & 1e-4\\
        & Learning Rate Step Size && - & 0-2 \\
        & Momentum && - & 0.9 \\
        & Batch Size && 256 & 32\\
        & Model Pool Size && 8 & 9\\
        & Training / Finetuning Epochs && 46, 50 & 0-2\\
        & Stage of Teachers && 11-46, with step 5; 16-50, with step 5 & -\\
        \midrule
        \multirow{10}{*}{\rotatebox[origin=c]{90}{\bf ImageNet-1K}}
        & Optimizer && SGD & SGD\\
        & Base Learning Rate && 0.1 & 0.1-0.001\\
        & Learning Rate Scheduler && Step & Step \\
        & Weight Decay && 1e-4 & 1e-4\\
        & Learning Rate Step Size && 30 & 0-2 \\
        & Momentum && 0.9 & 0.9 \\
        & Batch Size && 32 & 32\\
        & Model Pool Size && 9 & 10\\
        & Training / Finetuning Epochs && 41, 61, 71 & 0-2\\
        & Stage of Teachers && 1-41, with step 5; 21-61, with step 5; 31-71, with step 5 & -\\
        \bottomrule
    \end{tabular}
    }
    \caption{The hyper-parameters for model pool generation part.}
    \label{tab:modelpoold}
\end{table}

\begin{table}
    \centering
    \resizebox{\linewidth}{!}{
    \begin{tabular}{c l c c c c c}
        \toprule
        & \bf Hypeparameter && \bf Prior-Generation & \bf Post-Generation \\
        \midrule
        \multirow{10}{*}{\rotatebox[origin=c]{90}{\bf Tiny-ImageNet}}
        & Optimizer && Adam & Adam\\
        & Base Learning Rate && 0.1 & 0.1\\
        & Learning Rate Scheduler && Cosine & Cosine \\
        & Weight of BN Loss && 1.0 & 1.0\\
        & Weight Decay && 1e-4 & 1e-4\\
        & Momentum && 0.5, 0.9 & 0.5, 0.9 \\
        & Batch Size && 100 & 100\\
        & Model Pool Size && 8 & 9\\
        & Generation Iterations && 4000 & 4000 \\
        & Number of Teachers Ensemble && 3 for data generation at one time & 5 for data generation at one time \\
        \midrule
        \multirow{10}{*}{\rotatebox[origin=c]{90}{\bf ImageNet-1K}}
        & Optimizer && Adam & Adam\\
        & Base Learning Rate && 0.25 & 0.25\\
        & Learning Rate Scheduler && Cosine & Cosine \\
        & Weight of BN Loss && 0.01 & 0.01\\
        & Weight Decay && 1e-4 & 1e-4\\
        & Momentum && 0.5, 0.9 & 0.5, 0.9 \\
        & Batch Size && 100 & 100\\
        & Model Pool Size && 9 & 10\\
        & Generation Iterations && 6000 & 6000 \\
        & Number of Teachers Ensemble && 3 for data generation at one time & 4 for data generation at one time\\
        \bottomrule
    \end{tabular}
    }
    \caption{The hyper-parameters for data generation part.}
    \label{tab:trainingd}
\end{table}
\subsection{Validation}
For the validation part, the generated data is first augmented using the CutMix strategy, and then the soft labels of the augmented data are generated by the model pool. The number of validation epochs for Tiny-ImageNet is 100, and for ImageNet-1K is 300. The training batch size is 256 for Tiny-ImageNet and 1024 for ImageNet-1K. The optimizer we adopted is AdamW, and the optimizer momentum is 0.9 and 0.999, respectively. The base learning rate is 0.001, the weight decay is 0.01, and the learning rate scheduler is Cosine.

\section{More Visualization Results}
Here, we provide additional visualizations of the synthetic dataset generated by our method, including different settings on Tiny-ImageNet~(Fig.~\ref{fig:vis2}) and ImageNet-1K~(Fig.~\ref{fig:vis3}). From a visualization perspective, our method showcases a broader range of perspectives and diversity. Here, for instance, in the whistle category, it can be observed that the focal points of the generated model pool by different methods are distinct. Post-generation focuses on the object itself, while prior-generation also pays attention to the surrounding environment, \textit{e.g.}, the animal blowing the whistle.

\begin{figure}[t]
  \centering
   \includegraphics[width=\linewidth]{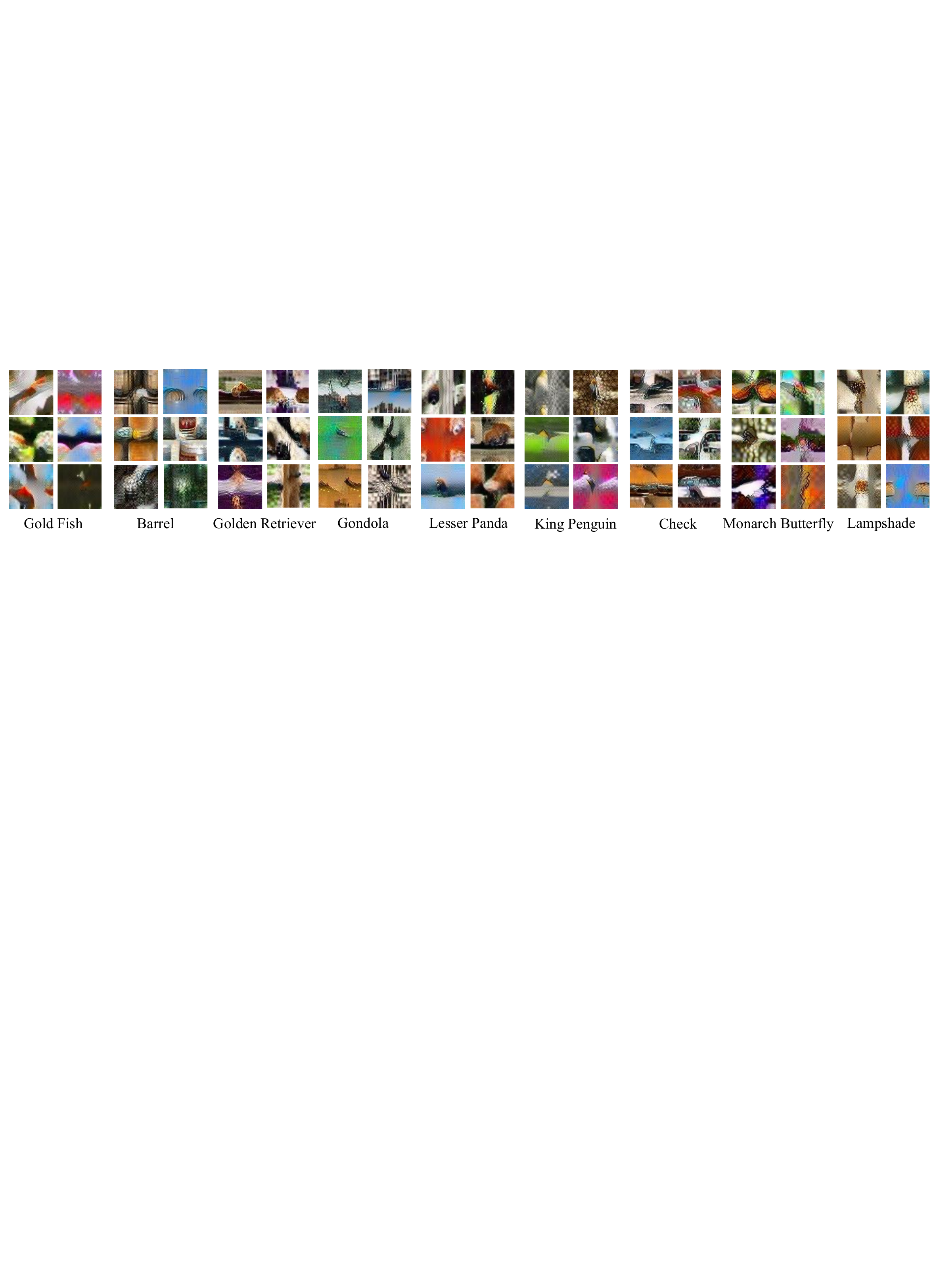}
   \caption{Visualization results of our proposed method for Tiny-ImageNet IPC 50.}
   \label{fig:vis2}
\end{figure}

\begin{figure}[t]
  \centering
   \includegraphics[width=\linewidth]{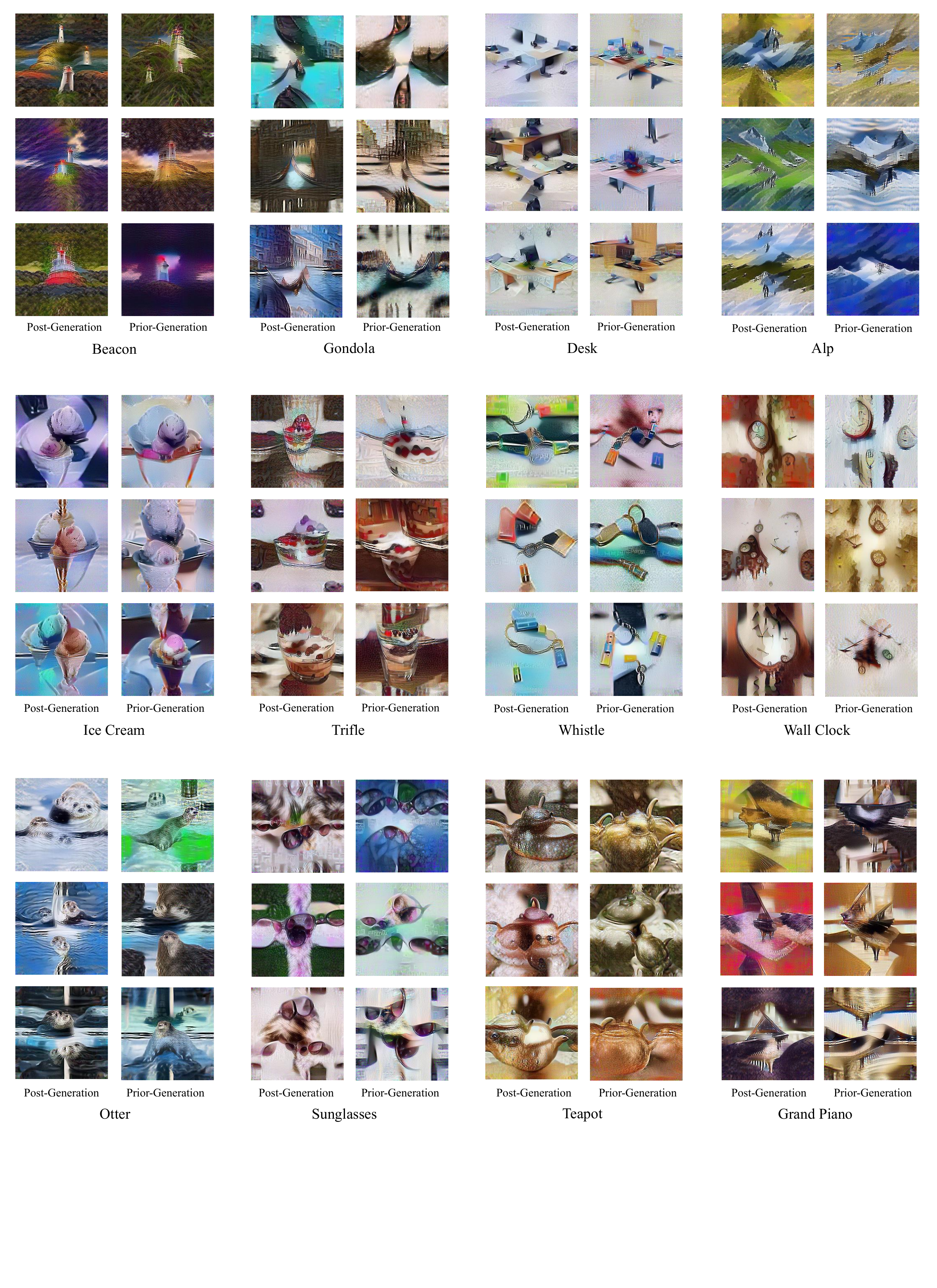}
   \caption{Visualization results of our proposed method for ImageNet-1K IPC 50.}
   \label{fig:vis3}
\end{figure}

\end{document}